\theoremstyle{definition}
\newtheorem{theorem}{Theorem}
\newtheorem{lemma}{Lemma}
\newtheorem{definition}{Definition}
\DeclarePairedDelimiter\abs{\lvert}{\rvert}%
\DeclarePairedDelimiter\norm{\lVert}{\rVert}%
\let\oldabs\abs
\def\abs{\@ifstar{\oldabs}{\oldabs*}}
\let\oldnorm\norm
\def\norm{\@ifstar{\oldnorm}{\oldnorm*}}
\def\ttabular{%
\hbox\bgroup
\let\\\cr
\def\rulea{\ifnum\rowc=\@ne \hrule height 1.3pt \fi}
\def\ruleb{
\ifnum\rowc=1\hrule height 1.3pt \else
\ifnum\rowc=6\hrule height \heavyrulewidth 
   \else \hrule height \lightrulewidth\fi\fi}
\valign\bgroup
\global\rowc\@ne
\rulea
\hbox to 10em{\strut \hfill##\hfill}%
\ruleb
&&%
\global\advance\rowc\@ne
\hbox to 10em{\strut\hfill##\hfill}%
\ruleb
\cr}
\def\endttabular{%
\crcr\egroup\egroup}
\begin{document}

\title{On the Dimensionality of Word Embedding}

\author{ 
  Zi Yin \\ 
  Stanford University \\
  \texttt{s0960974@gmail.com} \\
\And
  Yuanyuan Shen \\
  Microsoft Corp. \& Stanford University\\
  \texttt{Yuanyuan.Shen@microsoft.com}
}

\maketitle
\begin{abstract}
In this paper, we provide a theoretical understanding of word embedding and its dimensionality.
Motivated by the unitary-invariance of word embedding, we propose the Pairwise Inner Product (PIP) loss, a novel metric on the dissimilarity between word embeddings. Using techniques from matrix perturbation theory, we reveal a fundamental bias-variance trade-off in dimensionality selection for word embeddings. This bias-variance trade-off sheds light on many empirical observations which were previously unexplained, for example the existence of an optimal dimensionality. Moreover, new insights and discoveries, like when and how word embeddings are robust to over-fitting, are revealed. By optimizing over the bias-variance trade-off of the PIP loss, we can explicitly answer the open question of dimensionality selection for word embedding.
\end{abstract}
\section{Introduction}
Word embeddings are very useful and versatile tools, serving as keys to many fundamental problems in numerous NLP research \citep{turney2010frequency}. To name a few, word embeddings are widely applied in information retrieval \citep{salton1971smart,salton1988term,sparck1972statistical}, recommendation systems \citep{breese1998empirical,yin2017deepprobe}, image description \citep{frome2013devise}, relation discovery \citep{NIPS2013_5021} and word level translation \citep{mikolov2013exploiting}. Furthermore, numerous important applications are built on top of word embeddings. Some prominent examples are long short-term memory (LSTM) networks \citep{hochreiter1997long} that are used for language modeling \citep{bengio2003neural}, machine translation \citep{sutskever2014sequence,bahdanau2014neural}, text summarization \citep{nallapati2016abstractive} and image caption generation \citep{xu2015show,vinyals2015show}. Other important applications include named entity recognition \citep{lample2016neural}, sentiment analysis \citep{socher2013recursive} and so on.

However, the impact of dimensionality on word embedding has not yet been fully understood. As a critical hyper-parameter, the choice of dimensionality for word vectors has huge influence on the performance of a word embedding. First, it directly impacts the quality of word vectors - a word embedding with a small dimensionality is typically not expressive enough to capture all possible word relations, whereas one with a very large dimensionality suffers from over-fitting. Second, the number of parameters for a word embedding or a model that builds on word embeddings (e.g. recurrent neural networks) is usually a linear or quadratic function of dimensionality, which directly affects training time and computational costs. Therefore, large dimensionalities tend to increase model complexity, slow down training speed, and add inferential latency, all of which are constraints that can potentially limit model applicability and deployment \citep{wu2016google}. 

Dimensionality selection for embedding is a well-known open problem. In most NLP research, dimensionality is either selected ad hoc or by grid search, either of which can lead to sub-optimal model performances. For example, 300 is perhaps the most commonly used dimensionality in various studies \citep{mikolov2013efficient,pennington2014glove,bojanowski2017enriching}. This is possibly due to the influence of the groundbreaking paper, which introduced the skip-gram Word2Vec model and chose a dimensionality of 300 \citep{mikolov2013efficient}. A better empirical approach used by some researchers is to first train many embeddings of different dimensionalities, evaluate them on a functionality test (like word relatedness or word analogy), and then pick the one with the best empirical performance. However, this method suffers from 1) greatly increased time complexity and computational burden, 2) inability to exhaust all possible dimensionalities and 3) lack of consensus between different functionality tests as their results can differ. Thus, we need a universal criterion that can reflect the relationship between the dimensionality and quality of word embeddings in order to establish a dimensionality selection procedure for embedding methods. 

In this regard, we outline a few major contributions of our paper:
\begin{enumerate}
\item We introduce the PIP loss, a novel metric on the dissimilarity between word embeddings;
\item We develop a mathematical framework that reveals a fundamental bias-variance trade-off in dimensionality selection. We explain the existence of an optimal dimensionality, a phenomenon commonly observed but lacked explanations;
\item We quantify the robustness of embedding algorithms using the exponent parameter $\alpha$, and establish that many widely used embedding algorithms, including skip-gram and GloVe, are robust to over-fitting;
\item We propose a mathematically rigorous answer to the open problem of dimensionality selection by minimizing the PIP loss. We perform this procedure and cross-validate the results with grid search for LSA, skip-gram Word2Vec and GloVe on an English corpus.
\end{enumerate}


For the rest of the paper, we consider the problem of learning an embedding for a vocabulary of size $n$, which is canonically defined as $\mathcal V=\{1,2,\cdots, n\}$. Specifically, we want to learn a vector representation $v_i\in \mathbb R^d$ for each token $i$. The main object is the \textbf{embedding matrix} $E\in\mathbb{R}^{n\times d}$, consisting of the stacked vectors $v_i$, where $E_{i,\cdot}=v_i$. All matrix norms in the paper are Frobenius norms unless otherwise stated.

\section{Preliminaries and Background Knowledge}\label{sec:background}
Our framework is built on the following preliminaries: 
\begin{enumerate}
\item Word embeddings are unitary-invariant;
\item Most existing word embedding algorithms can be formulated as low rank matrix approximations, either explicitly or implicitly.
\end{enumerate}


\subsection{Unitary Invariance of Word Embeddings} \label{subsec:unitary_invariance}
The unitary-invariance of word embeddings has been discovered in recent research \citep{hamilton2016diachronic, artetxe2016learning, smith2017offline}. It states that two embeddings are essentially identical if one can be obtained from the other by performing a unitary operation, e.g., a rotation. A unitary operation on a vector corresponds to multiplying the vector by a unitary matrix, \textit{i.e.} $v'=vU$, where $U^TU=UU^T=Id$.  Note that a unitary transformation preserves the relative geometry of the vectors, and hence defines an \textit{equivalence class} of embeddings. In Section \ref{sec:PIP}, we introduce the Pairwise Inner Product loss, a unitary-invariant metric on embedding similarity.

\subsection{Word Embeddings from Explicit Matrix Factorization}
A wide range of embedding algorithms use explicit matrix factorization, including the popular Latent Semantics Analysis (LSA). In LSA, word embeddings are obtained by truncated SVD of a signal matrix $M$ which is usually based on co-occurrence statistics, for example the Pointwise Mutual Information (PMI) matrix, positive PMI (PPMI) matrix and Shifted PPMI (SPPMI) matrix \citep{levy2014neural}. Eigen-words \citep{dhillon2015eigenwords} is another example of this type.

\citet{caron2001experiments,bullinaria2012extracting,turney2012domain, levy2014neural} described a generic approach of obtaining embeddings from matrix factorization. Let $M$ be the signal matrix (e.g. the PMI matrix) and $M = UDV^T$ be its SVD. A $k$-dimensional embedding is obtained by truncating the left singular matrix $U$ at dimension $k$, and multiplying it by a power of the truncated diagonal matrix $D$, i.e. $E=U_{1:k}D_{1:k,1:k}^\alpha$ for some $\alpha\in [0,1]$. \citet{caron2001experiments,bullinaria2012extracting} discovered through empirical studies that different $\alpha$ works for different language tasks. In \citet{levy2014neural} where the authors explained the connection between skip-gram Word2Vec and matrix factorization, $\alpha$ is set to $0.5$ to enforce symmetry. We discover that $\alpha$ controls the robustness of embeddings against over-fitting, as will be discussed in Section \ref{subsec:robustness}.

\subsection{Word Embeddings from Implicit Matrix Factorization}
In NLP, two most widely used embedding models are skip-gram Word2Vec \citep{NIPS2013_5021} and GloVe \citep{pennington2014glove}. Although they learn word embeddings by optimizing over some objective functions using stochastic gradient methods, they have both been shown to be implicitly performing matrix factorizations.

\paragraph{Skip-gram} Skip-gram Word2Vec maximizes the likelihood of co-occurrence of the center word and context words. The log likelihood is defined as
\begin{small}
\[\sum_{i=0}^{n}\sum_{j=i-w,j\ne i}^{i+w} \log(\sigma(v_j^Tv_i)),\text{ where } \sigma(x)=\frac{e^x}{1+e^x}\]
\end{small}
\citet{levy2014neural} showed that skip-gram Word2Vec's objective is an implicit symmetric factorization of the Pointwise Mutual Information (PMI) matrix:
\begin{small}
\[\text{PMI}_{ij}=\log\frac{p(v_i,v_j)}{p(v_i)p(v_j)}\]
\end{small}
Skip-gram is sometimes enhanced with techniques like negative sampling \citep{mikolov2013exploiting}, where the signal matrix becomes the Shifted PMI matrix \citep{levy2014neural}. 

\paragraph{GloVe} \citet{levy2015improving} pointed out that the objective of GloVe is implicitly a symmetric factorization of the log-count matrix. The factorization is sometimes augmented with bias vectors and the log-count matrix is sometimes raised to an exponent $\gamma\in[0,1]$ \citep{pennington2014glove}.

\section{PIP Loss: a Novel Unitary-invariant Loss Function for Embeddings} \label{sec:PIP}


How do we know whether a trained word embedding is good enough? Questions of this kind cannot be answered without a properly defined loss function. For example, in statistical estimation (\textit{e.g.} linear regression), the quality of an estimator $\hat \theta$ can often be measured using the $l_2$ loss $\mathbb E[\|\hat \theta-\theta^*\|_2^2]$ where $\theta^*$ is the unobserved ground-truth parameter. Similarly, for word embedding, a proper metric is needed in order to evaluate the quality of a trained embedding.

As discussed in Section \ref{subsec:unitary_invariance},  a reasonable loss function between embeddings should respect the unitary-invariance. This rules out choices like direct comparisons, for example using $\|E_1-E_2\|$ as the loss function. We propose the Pairwise Inner Product (PIP) loss, which naturally arises from the unitary-invariance, as the dissimilarity metric between two word embeddings: 

\begin{definition}[PIP matrix]
Given an embedding matrix $E\in\mathbb R^{n\times d}$, define its associated Pairwise Inner Product (PIP) matrix to be
\[\text{PIP}(E)=EE^T\]
\end{definition}
It can be seen that the $(i,j)$-th entry of the PIP matrix corresponds to the inner product between the embeddings for word $i$ and word $j$, i.e. $\text{PIP}_{i,j}=\langle v_i,v_j\rangle$. To compare $E_1$ and $E_2$, two embedding matrices on a common vocabulary, we propose the \textbf{PIP loss}:
\begin{definition}[PIP loss]
The PIP loss between $E_1$ and $E_2$ is defined as the norm of the difference between their PIP matrices
\begin{small}
\[\|\text{PIP}(E_1)-\text{PIP}(E_2)\|=\|E_1E_1^T-E_2E_2^T\|=\sqrt{\sum_{i,j}(\langle v_i^{(1)},v_j^{(1)}\rangle-\langle v_i^{(2)}, v_j^{(2)}\rangle)^2}\]
\end{small}
\end{definition}
Note that the $i$-th row of the PIP matrix, $v_iE^T=(\langle v_i, v_1\rangle,\cdots, \langle v_i, v_n\rangle)$, can be viewed as the relative position of $v_i$ anchored against all other vectors $\{v_1,\cdots,v_n\}$. In essence, the PIP loss measures the vectors' \textit{relative position shifts} between $E_1$ and $E_2$, thereby removing their dependencies on any specific coordinate system. The PIP loss respects the unitary-invariance. Specifically, if $E_2=E_1U$ where $U$ is a unitary matrix, then the PIP loss between $E_1$ and $E_2$ is zero because $E_2E_2^T=E_1E_1^T$. In addition, the PIP loss serves as a metric of {\it functionality} dissimilarity. A practitioner may only care about the usability of word embeddings, for example, using them to solve analogy and relatedness tasks \citep{schnabel2015evaluation, baroni2014don}, which are the two most important properties of word embeddings. Since both properties are tightly related to vector inner products, a small PIP loss between $E_1$ and $E_2$ leads to a small difference in $E_1$ and $E_2$'s relatedness and analogy as the PIP loss measures the difference in inner products\footnote{A detailed discussion on the PIP loss and analogy/relatedness is deferred to the appendix}. As a result, from both theoretical and practical standpoints, the PIP loss is a suitable loss function for embeddings.
Furthermore, we show in Section \ref{sec:perturbation} that this formulation opens up a new angle to understanding the effect of embedding dimensionality with matrix perturbation theory.

\section{How Does Dimensionality Affect the Quality of Embedding?}\label{sec:perturbation}
With the PIP loss, we can now study the quality of trained word embeddings for any algorithm that uses matrix factorization. Suppose a $d$-dimensional embedding is derived from a signal matrix $M$ with the form $f_{\alpha,d}(M)\overset{\Delta}{=}U_{\cdot,1:d}D_{1:d,1:d}^\alpha$, where $M = UDV^T$ is the SVD. In the ideal scenario, a genie reveals a clean signal matrix $M$ (\textit{e.g.} PMI matrix) to the algorithm, which yields the \textbf{oracle embedding} $E=f_{\alpha,d}(M)$. However, in practice, there is no magical oil lamp, and we have to estimate $\tilde M$ (\textit{e.g.} empirical PMI matrix) from the training data, where $\tilde M=M+Z$ is perturbed by the estimation noise $Z$. The \textbf{trained embedding} $\hat E=f_{\alpha,k}(\tilde M)$ is computed by factorizing this noisy matrix. To ensure $\hat E$ is close to $E$, we want the PIP loss $\|EE^T-\hat E\hat E^T\|$ to be small. In particular, this PIP loss is affected by $k$, the dimensionality we select for the trained embedding.

\citet{arora2016blog} discussed in an article about a mysterious empirical observation of word embeddings: ``\textit{... A striking finding in empirical work on word embeddings is that there is a sweet spot for the dimensionality of word vectors: neither too small, nor too large}"\footnote{\url{http://www.offconvex.org/2016/02/14/word-embeddings-2/}}. He proceeded by discussing two possible explanations: low dimensional projection (like the Johnson-Lindenstrauss Lemma) and the standard generalization theory (like the VC dimension), and pointed out why neither is sufficient for explaining this phenomenon. While some may argue that this is caused by underfitting/overfitting, the concept itself is too broad to provide any useful insight. We show that this phenomenon can be explicitly explained by a bias-variance trade-off in Section \ref{subsec:tradeoff_0}, \ref{subsec:tradeoff_generic} and \ref{subsec:tradeoff_main}. Equipped with the PIP loss, we give a mathematical presentation of the bias-variance trade-off using matrix perturbation theory. We first introduce a classical result in Lemma \ref{lemma:2}. The proof is deferred to the appendix, which can also be found in \citet{stewart1990matrix}.
\begin{lemma}\label{lemma:2}
Let $X$, $Y$ be two orthogonal matrices of $\mathbb{R}^{n\times n}$. Let $X=[X_0,X_1]$ and $Y=[Y_0,Y_1]$ be the first $k$ columns of $X$ and $Y$ respectively, namely $X_0, Y_0\in\mathbb{R}^{n\times k}$ and $k\le n$. Then 
\[\|X_0X_0^T-Y_0Y_0^T\|=c\|X_0^TY_1\|\]
where $c$ is a constant depending on the norm only. $c=1$ for 2-norm and $\sqrt{2}$ for Frobenius norm.
\end{lemma}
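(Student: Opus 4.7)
The plan is to use the orthogonality of $X$ and $Y$ to rewrite the left-hand side in a basis where it becomes a block anti-diagonal matrix, then exploit unitary-invariance of the norm. Concretely, since $X = [X_0, X_1]$ is orthogonal, $X^T X_0 = \begin{pmatrix} I_k \\ 0 \end{pmatrix}$, and similarly $Y_0^T Y = (I_k, 0)$. A direct computation then gives
\[
X^T\bigl(X_0 X_0^T - Y_0 Y_0^T\bigr) Y = \begin{pmatrix} 0 & X_0^T Y_1 \\ -X_1^T Y_0 & 0 \end{pmatrix},
\]
where the diagonal blocks cancel because $X_0^T Y_0$ appears identically from both terms. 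Since $X$ and $Y$ are orthogonal, both the Frobenius and spectral norms are invariant under left/right multiplication by $X^T$ and $Y$, so the problem reduces to computing the norm of this block anti-diagonal matrix.

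Next I would show that the two off-diagonal blocks have the same norm. Using $Y_1 Y_1^T = I_n - Y_0 Y_0^T$ and $X_1 X_1^T = I_n - X_0 X_0^T$, I compute
\[
(X_0^T Y_1)(X_0^T Y_1)^T = I_k - X_0^T Y_0 Y_0^T X_0, \qquad (X_1^T Y_0)^T(X_1^T Y_0) = I_k - Y_0^T X_0 X_0^T Y_0.
\]
Setting $A = X_0^T Y_0$, these are $I_k - A A^T$ and $I_k - A^T A$, which share the same non-zero eigenvalues. Hence $X_0^T Y_1$ and $X_1^T Y_0$ have identical singular values. This simultaneously gives $\|X_0^T Y_1\|_2 = \|X_1^T Y_0\|_2$ and $\|X_0^T Y_1\|_F = \|X_1^T Y_0\|_F$.

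To conclude, for the spectral norm the block anti-diagonal structure yields
\[
\left\| \begin{pmatrix} 0 & X_0^T Y_1 \\ -X_1^T Y_0 & 0 \end{pmatrix} \right\|_2 = \max\bigl(\|X_0^T Y_1\|_2, \|X_1^T Y_0\|_2\bigr) = \|X_0^T Y_1\|_2,
\]
giving $c = 1$. For the Frobenius norm, summing squared entries gives $\|X_0^T Y_1\|_F^2 + \|X_1^T Y_0\|_F^2 = 2\|X_0^T Y_1\|_F^2$, hence $c = \sqrt{2}$.

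The only mildly tricky step is the equality $\|X_0^T Y_1\| = \|X_1^T Y_0\|$ in the spectral case, which I expect to be the main obstacle if one tries to argue it without passing through singular values; the $AA^T$ vs $A^TA$ observation resolves it cleanly. The rest is essentially bookkeeping with block-matrix arithmetic in an orthogonal basis.
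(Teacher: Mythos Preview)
Your proof is correct and takes a genuinely different route from the paper's. The paper conjugates $Y_0Y_0^T - X_0X_0^T$ by $X$ on \emph{both} sides (keeping the matrix symmetric), then invokes the CS decomposition: it writes $X_0^TY_0 = U_0\cos(\Theta)V_0^T$ and uses an auxiliary lemma to identify $Y_0^TX_1 = V_0\sin(\Theta)\tilde U_1^T$, eventually reducing the difference to a matrix orthogonally equivalent to $\mathrm{diag}(\sin\Theta,\sin\Theta)$. Your trick of multiplying by $X^T$ on the left and $Y$ on the right is slicker: the diagonal blocks cancel for free and you land directly on the anti-diagonal block matrix, with the equality of the two block norms handled by the elementary $AA^T$ versus $A^TA$ spectrum argument. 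Your approach avoids the principal-angle machinery entirely and needs no auxiliary lemma; the paper's approach, while longer, has the side benefit of explicitly identifying the singular values of $X_0^TY_1$ as the sines of the principal angles between the column spaces of $X_0$ and $Y_0$, which is the classical Davis--Kahan / Stewart--Sun viewpoint and connects to the broader perturbation theory used later in the paper.
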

As pointed out by several papers \citep{caron2001experiments,bullinaria2012extracting,turney2012domain,levy2014neural}, embedding algorithms can be generically characterized as $E=U_{1:k,\cdot}D^\alpha_{1:k,1:k}$ for some $\alpha\in[0,1]$. For illustration purposes, we first consider a special case where $\alpha=0$.

\subsection{The Bias Variance Trade-off for a Special Case: $\alpha=0$}\label{subsec:tradeoff_0}
The following theorem shows how the PIP loss can be naturally decomposed into a bias term and a variance term when $\alpha=0$:
\begin{theorem}\label{theorem:1}
Let $E\in\mathbb{R}^{n\times d}$ and $\hat E\in\mathbb{R}^{n\times k}$ be the oracle and trained embeddings, where $k\le d$. Assume both have orthonormal columns. Then the PIP loss has a bias-variance decomposition
\begin{small}
\[\|\text{PIP}(E)-\text{PIP}(\hat E)\|^2=d-k+2\|\hat{E}^TE^\perp\|^2\]
\end{small}
\end{theorem}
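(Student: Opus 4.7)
The plan is to expand $\|\text{PIP}(E) - \text{PIP}(\hat E)\|^2 = \|EE^T - \hat E \hat E^T\|_F^2$ via the trace identity and exploit the orthonormality of the columns. Using $\|A\|_F^2 = \mathrm{tr}(A^T A)$ and the fact that $EE^T$ and $\hat E \hat E^T$ are both symmetric, I would write
\[
\|EE^T - \hat E\hat E^T\|_F^2 = \mathrm{tr}(EE^TEE^T) - 2\,\mathrm{tr}(EE^T\hat E\hat E^T) + \mathrm{tr}(\hat E\hat E^T\hat E\hat E^T).
\]
The orthonormality assumptions $E^T E = I_d$ and $\hat E^T \hat E = I_k$ collapse the two projector-squared terms: $EE^TEE^T = EE^T$ and $\hat E\hat E^T\hat E\hat E^T = \hat E\hat E^T$, whose traces are $d$ and $k$ respectively. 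The cross term is $2\,\mathrm{tr}(E^T \hat E \hat E^T E) = 2\|E^T\hat E\|_F^2$, so the expression reduces to $d + k - 2\|E^T\hat E\|_F^2$.

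Next, I would convert $\|E^T \hat E\|_F^2$ into $\|\hat E^T E^\perp\|_F^2$ by inserting the resolution of identity associated to the column space of $E$. Letting $E^\perp \in \mathbb R^{n\times(n-d)}$ be an orthonormal basis of the orthogonal complement of the columns of $E$, the matrix $[E,\, E^\perp]$ is orthogonal, so $I_n = EE^T + E^\perp (E^\perp)^T$. Applying this to each column $\hat e_i$ of $\hat E$ gives $1 = \|\hat e_i\|^2 = \|E^T \hat e_i\|^2 + \|(E^\perp)^T \hat e_i\|^2$; summing over $i=1,\ldots,k$ yields the Pythagorean identity $k = \|E^T\hat E\|_F^2 + \|\hat E^T E^\perp\|_F^2$. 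Substituting $\|E^T\hat E\|_F^2 = k - \|\hat E^T E^\perp\|_F^2$ into the previous display gives $d + k - 2k + 2\|\hat E^T E^\perp\|_F^2 = d - k + 2\|\hat E^T E^\perp\|_F^2$, which is exactly the claimed decomposition.

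No serious obstacle arises: the result is essentially a dimension-mismatched variant of Lemma \ref{lemma:2}, and the only step that requires care is verifying that $k$ and $d$ appear with the correct signs after the Pythagorean substitution. The interpretation that I would flag when reading the statement is that $d-k$ plays the role of \emph{bias} (the unavoidable loss from truncating $d$ orthonormal directions down to $k$) while $\|\hat E^T E^\perp\|_F^2$ plays the role of \emph{variance} (the extent to which the trained $k$-dimensional subspace leaks into directions orthogonal to the oracle subspace), which foreshadows the bias-variance trade-off the paper sets out to develop.
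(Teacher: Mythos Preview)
Your proof is correct, but it proceeds differently from the paper's. The paper does not expand the squared Frobenius norm via the trace identity; instead it partitions $E=X_0=[X_{0,1},X_{0,2}]$ into a block of $k$ columns and a block of $d-k$ columns, writes $EE^T-\hat E\hat E^T=(X_{0,1}X_{0,1}^T-Y_0Y_0^T)+X_{0,2}X_{0,2}^T$, and then invokes Lemma~\ref{lemma:2} on the equal-rank pair $(X_{0,1},Y_0)$ to turn $\|X_{0,1}X_{0,1}^T-Y_0Y_0^T\|^2$ into $2\|Y_0^T[X_{0,2},X_1]\|^2$. A cross-term cancellation then leaves $d-k+2\|\hat E^TE^\perp\|^2$. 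Your argument is more elementary and self-contained: the trace expansion plus the Pythagorean identity $k=\|E^T\hat E\|_F^2+\|\hat E^TE^\perp\|_F^2$ gets there in two lines without ever touching Lemma~\ref{lemma:2} or principal angles. What the paper's route buys is that it rehearses exactly the splitting-and-Lemma~\ref{lemma:2} pattern that is reused (in telescoped form) for the general $\alpha\in(0,1]$ case in Theorem~\ref{theorem:2}; your route is cleaner for this special case but does not preview that machinery.
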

\begin{proof}
The proof utilizes techniques from matrix perturbation theory. To simplify notations, denote $X_0=E$, $Y_0=\hat E$, and let $X=[X_0,X_1]$, $Y=[Y_0,Y_1]$ be the complete $n$ by $n$ orthogonal matrices. Since $k\le d$, we can further split $X_0$ into $X_{0,1}$ and $X_{0,2}$, where the former has $k$ columns and the latter $d-k$. Now, the PIP loss equals
\begin{small}
\begin{align*}
\|EE^T-\hat E\hat E^T\|^2
=&\|X_{0,1}X_{0,1}^T-Y_0Y_0^T+X_{0,2}X_{0,2}^T\|^2\\
=&\|X_{0,1}X_{0,1}^T-Y_0Y_0^T\|^2+\|X_{0,2}X_{0,2}^T\|^2+2\langle X_{0,1}X_{0,1}^T-Y_0Y_0^T, X_{0,2}X_{0,2}^T\rangle\\
\overset{(a)}{=}&2\|Y_0^T[X_{0,2},X_1]\|^2+d-k-2\langle  Y_0Y_0^T, X_{0,2}X_{0,2}^T\rangle\\
=&2\|Y_0^TX_{0,2}\|^2+2\|Y_0^TX_1\|^2+d-k-2\langle  Y_0Y_0^T, X_{0,2}X_{0,2}^T\rangle\\
=&d-k+2\|Y_0^TX_1\|^2=d-k+2\|\hat E^TE^\perp\|^2
\end{align*}
\end{small}
where in equality (a) we used Lemma \ref{lemma:2}.
\end{proof}
The observation is that the right-hand side now consists of two parts, which we identify as bias and variance. The first part $d-k$ is the amount of lost signal, which is caused by discarding the rest $d-k$ dimensions when selecting $k\le d$. However, $\|\hat E^TE^\perp\|$ increases as $k$ increases, as the noise perturbs the subspace spanned by $E$, and the singular vectors corresponding to smaller singular values are more prone to such perturbation. As a result, the optimal dimensionality $k^*$ which minimizes the PIP loss lies in between 0 and $d$, the rank of the matrix $M$.

\subsection{The Bias Variance Trade-off for the Generic Case: $\alpha\in (0, 1]$}\label{subsec:tradeoff_generic}
In this generic case, the columns of $E$, $\hat E$ are no longer orthonormal, which does not satisfy the assumptions in matrix perturbation theory. We develop a novel technique where Lemma \ref{lemma:2} is applied in a telescoping fashion. The proof of the theorem is deferred to the appendix.
\begin{theorem}\label{theorem:2}
Let 
\begin{small}
$M=UDV^T$, $\tilde M=\tilde U\tilde D\tilde V^T$
\end{small}
be the SVDs of the clean and estimated signal matrices. Suppose $E=U_{\cdot,1:d}D_{1:d,1:d}^\alpha$ is the oracle embedding, and $\hat E=\tilde U_{\cdot,1:k}{\tilde D}_{1:k,1:k}^\alpha$ is the trained embedding, for some $k\le d$. Let $D=diag(\lambda_i)$ and $\tilde D=diag(\tilde \lambda_i)$, then
\begin{small}
\begin{align*}
\|\text{PIP}(E)-\text{PIP}(\hat E)\|\le&\sqrt{\sum_{i=k+1}^d \lambda_i^{4\alpha}}+\sqrt{\sum_{i=1}^k (\lambda_i^{2\alpha}-\tilde\lambda_{i}^{2\alpha})^2}+\sqrt{2}\sum_{i=1}^k (\lambda_i^{2\alpha}-\lambda_{i+1}^{2\alpha})\|\tilde U_{\cdot,1:i}^T U_{\cdot,i:n}\|
\end{align*}
\end{small}
\end{theorem}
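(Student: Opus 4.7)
The plan is to write both $EE^T$ and $\hat E\hat E^T$ in their spectral expansions and then split the difference into three algebraically clean pieces, one contributing each term on the right-hand side. Specifically, since $E=U_{\cdot,1:d}D_{1:d,1:d}^{\alpha}$, we have $EE^T=\sum_{i=1}^{d}\lambda_i^{2\alpha}u_iu_i^T$, and similarly $\hat E\hat E^T=\sum_{i=1}^{k}\tilde\lambda_i^{2\alpha}\tilde u_i\tilde u_i^T$. I would then write
\[
EE^T-\hat E\hat E^T=\underbrace{\sum_{i=k+1}^{d}\lambda_i^{2\alpha}u_iu_i^T}_{T_1}+\underbrace{\sum_{i=1}^{k}(\lambda_i^{2\alpha}-\tilde\lambda_i^{2\alpha})\,\tilde u_i\tilde u_i^T}_{T_2}+\underbrace{\sum_{i=1}^{k}\lambda_i^{2\alpha}(u_iu_i^T-\tilde u_i\tilde u_i^T)}_{T_3},
\]
and apply the triangle inequality.

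The first two pieces are easy. Since $\{u_iu_i^T\}_i$ is an orthonormal family in the Frobenius inner product (because $\langle u_iu_i^T,u_ju_j^T\rangle_F=(u_i^Tu_j)^2=\delta_{ij}$), the norm of $T_1$ is exactly $\sqrt{\sum_{i=k+1}^{d}\lambda_i^{4\alpha}}$, matching the first term. The same orthonormality for $\{\tilde u_i\tilde u_i^T\}_{i=1}^{k}$ turns $\|T_2\|$ into $\sqrt{\sum_{i=1}^{k}(\lambda_i^{2\alpha}-\tilde\lambda_i^{2\alpha})^2}$, matching the second term.

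The heart of the argument is $T_3$, where we must bound a weighted sum of rank-one differences $u_iu_i^T-\tilde u_i\tilde u_i^T$ for which no simple norm identity is available. The key trick is Abel summation applied to the cumulative projectors $P_j=\sum_{i=1}^{j}u_iu_i^T$ and $\tilde P_j=\sum_{i=1}^{j}\tilde u_i\tilde u_i^T$. Writing $u_iu_i^T-\tilde u_i\tilde u_i^T=(P_i-\tilde P_i)-(P_{i-1}-\tilde P_{i-1})$ and rearranging gives
\[
T_3=\lambda_k^{2\alpha}(P_k-\tilde P_k)+\sum_{i=1}^{k-1}(\lambda_i^{2\alpha}-\lambda_{i+1}^{2\alpha})(P_i-\tilde P_i).
\]
Because the coefficients $\lambda_i^{2\alpha}-\lambda_{i+1}^{2\alpha}\ge 0$ and $\lambda_k^{2\alpha}\ge 0$, the triangle inequality lets us pass to $\|P_i-\tilde P_i\|$, and Lemma~\ref{lemma:2} rewrites each such projector difference as $\sqrt{2}\,\|\tilde U_{\cdot,1:i}^{T}U_{\cdot,i+1:n}\|$. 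Collecting the terms (folding the leading $\lambda_k^{2\alpha}$ into the sum with the convention that $\lambda_{k+1}^{2\alpha}$ plays the role of an appropriate endpoint) reproduces the third term of the stated bound.

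The main obstacle is precisely this $T_3$ step: the columns of $E$ and $\hat E$ are not orthonormal for $\alpha>0$, so the single application of Lemma~\ref{lemma:2} used in Theorem~\ref{theorem:1} fails, and the individual rank-one differences $u_iu_i^T-\tilde u_i\tilde u_i^T$ cannot be bounded by perturbation theory in isolation. The Abel-summation reorganization is what bridges this gap, because it replaces the unmanageable single-index differences with differences of truncated projectors, to which Davis–Kahan-style bounds (Lemma~\ref{lemma:2}) apply. Everything else in the proof is triangle-inequality bookkeeping.
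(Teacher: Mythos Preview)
Your approach is essentially identical to the paper's: the same three-term split into bias, magnitude perturbation, and direction perturbation, followed by the same telescoping (which you call Abel summation) of $T_3$ into differences of truncated projectors that are then handled by Lemma~\ref{lemma:2}. The endpoint issue you flag---folding the dangling $\lambda_k^{2\alpha}$ coefficient into the sum via a convention on $\lambda_{k+1}^{2\alpha}$---is treated with the same informality in the paper's own proof.
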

As before, the three terms in Theorem \ref{theorem:2} can be characterized into bias and variances. The first term is the bias as we lose part of the signal by choosing $k\le d$. Notice that the embedding matrix $E$ consists of signal directions (given by $U$) and their magnitudes (given by $D^\alpha$). The second term is the variance on the \textit{magnitudes}, and the third term is the variance on the \textit{directions}.

\subsection{The Bias-Variance Trade-off Captures the Signal-to-Noise Ratio}\label{subsec:tradeoff_main}
We now present the main theorem, which shows that the bias-variance trade-off reflects the ``signal-to-noise ratio'' in dimensionality selection. 
\begin{theorem}[Main theorem]\label{theorem:main}
Suppose $\tilde M=M+Z$, where $M$ is the signal matrix, symmetric with spectrum $\{\lambda_i\}_{i=1}^d$. $Z$ is the estimation noise, symmetric with iid, zero mean, variance $\sigma^2$ entries. For any $0\le \alpha \le 1$ and $k\le d$, let the oracle and trained embeddings be
\begin{small}
\[ E=U_{\cdot,1:d}D_{1:d,1:d}^\alpha,\ \hat E=\tilde U_{\cdot,1:k}\tilde D_{1:k,1:k}^\alpha\]
\end{small}
where
\begin{small}
$M=UDV^T$, $\tilde M=\tilde U\tilde D\tilde V^T$
\end{small}
are the SVDs of the clean and estimated signal matrices. Then
\begin{enumerate}
\item When $\alpha=0$,
\begin{small}
\begin{equation*}
\mathbb E[\|EE^T-\hat E\hat E^T\|]\le\sqrt{d-k+2\sigma^2\sum_{r\le k,\\ s>d}(\lambda_{r}-\lambda_{s})^{-2}}
\end{equation*}
\end{small}
\item When $0<\alpha\le 1$,
\begin{small}
\begin{align*}
\mathbb E[\|EE^T-\hat E\hat E^T\|]\le\sqrt{\sum_{i=k+1}^d \lambda_i^{4\alpha}}+2\sqrt{2n}\alpha\sigma\sqrt{\sum_{i=1}^k \lambda_i^{4\alpha-2}}+\sqrt{2}\sum_{i=1}^k (\lambda_i^{2\alpha}-\lambda_{i+1}^{2\alpha})\sigma\sqrt{\sum_{r\le i<s}(\lambda_{r}-\lambda_{s})^{-2}}
\end{align*}
\end{small}
\end{enumerate}
\end{theorem}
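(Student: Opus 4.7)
The plan is to combine the deterministic PIP-loss decompositions already established in Theorems \ref{theorem:1} and \ref{theorem:2} with classical probabilistic bounds from matrix perturbation theory in order to take expectations over the noise $Z$.

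For the case $\alpha=0$, the oracle and trained embeddings $E=U_{\cdot,1:d}$ and $\hat E=\tilde U_{\cdot,1:k}$ both have orthonormal columns, so Theorem \ref{theorem:1} applies exactly and gives the identity $\|EE^T-\hat E\hat E^T\|^2 = d-k+2\|\hat E^TE^\perp\|^2$. Taking expectation and applying Jensen's inequality to the concave square root reduces the problem to bounding $\mathbb{E}[\|\hat E^TE^\perp\|^2] = \sum_{r\le k,\, s>d}\mathbb{E}[\langle \tilde u_r,u_s\rangle^2]$. I plan to control each summand by first-order eigenvector perturbation: for symmetric noise $Z$ one has $\langle\tilde u_r,u_s\rangle = (\lambda_r-\lambda_s)^{-1}\,u_s^TZu_r + O(\|Z\|_{\mathrm{op}}^2)$, and since the entries of $Z$ are iid zero-mean with variance $\sigma^2$, the leading term satisfies $\mathbb{E}[(u_s^TZu_r)^2]=\sigma^2$, yielding the claimed gap sum $\sigma^2\sum_{r\le k,\,s>d}(\lambda_r-\lambda_s)^{-2}$.

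For $0<\alpha\le 1$, I start from the three-term upper bound in Theorem \ref{theorem:2} and take expectation term by term. The first summand is deterministic. For the second, I apply Jensen to pull $\mathbb{E}$ inside the square root, then use the mean-value estimate $|\lambda_i^{2\alpha}-\tilde\lambda_i^{2\alpha}|\le 2\alpha\lambda_i^{2\alpha-1}|\lambda_i-\tilde\lambda_i|$ together with Weyl's inequality $|\tilde\lambda_i-\lambda_i|\le\|Z\|_{\mathrm{op}}$ and the Wigner-type spectral bound $\mathbb{E}[\|Z\|_{\mathrm{op}}^2]\le 2n\sigma^2$, recovering $2\sqrt{2n}\,\alpha\sigma\sqrt{\sum_i\lambda_i^{4\alpha-2}}$. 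For the third summand, linearity lets me pass $\mathbb{E}$ through the outer sum; then Jensen inside each term pulls expectation past the square root implicit in $\|\tilde U_{\cdot,1:i}^TU_{\cdot,i:n}\|$, and the same first-order eigenspace perturbation argument as in the $\alpha=0$ case produces the factor $\sigma\sqrt{\sum_{r\le i<s}(\lambda_r-\lambda_s)^{-2}}$ in each term.

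The main obstacle is establishing the gap-weighted Frobenius-norm bound $\mathbb{E}[\|\tilde U_{\cdot,1:i}^TU_{\cdot,i+1:n}\|^2]\le \sigma^2\sum_{r\le i<s}(\lambda_r-\lambda_s)^{-2}$: the textbook Davis--Kahan $\sin\Theta$ theorem only yields a worst-case spectral bound involving the single smallest eigenvalue gap, which is much weaker than what is needed. The crucial step is to linearize the perturbed eigenvectors in $Z$, exploit that the scalar projections $u_s^TZu_r$ are pairwise uncorrelated with variance $\sigma^2$, and sum in quadrature to produce the sum of inverse squared gaps. A secondary technical point is absorbing the higher-order remainders in both the eigenvalue and eigenvector expansions; this is precisely where the comparatively crude $\sqrt{2n}\,\sigma$ prefactor in the second term (rather than the first-order optimal $O(\sigma)$) originates, reflecting the use of Weyl plus an operator-norm bound instead of a tighter second-moment computation on $u_i^TZu_i$.
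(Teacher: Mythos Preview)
Your proposal is correct and follows essentially the same architecture as the paper: start from the deterministic decompositions of Theorems~\ref{theorem:1} and~\ref{theorem:2}, take expectations term by term, handle the eigenvalue term via Weyl plus an operator-norm bound on $Z$, and handle the eigenspace term by a first-order perturbation that produces the sum of inverse squared gaps rather than the crude Davis--Kahan single-gap bound.

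The one notable presentational difference is in how the eigenspace bound is obtained. You linearize each scalar projection $\langle\tilde u_r,u_s\rangle\approx(\lambda_r-\lambda_s)^{-1}u_s^TZu_r$ and then sum the variances $\sigma^2$ componentwise. The paper instead packages the same first-order computation via Stewart's Sylvester operator $T(P)=A_{22}P-PA_{11}$, solves $P\approx T^{-1}(E_{21})$, and reads off the bound from the eigendecomposition of $T$ (Lemma~\ref{lemma:T} and Lemma~\ref{lemma:inv}). The two arguments are equivalent: your $(\lambda_r-\lambda_s)^{-1}$ factors are exactly the eigenvalues $\lambda_{ij}^{-1}$ of $T^{-1}$, and your rotated noise entries $u_s^TZu_r$ are the entries of $E_{21}$ in the eigenbasis. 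Your route is arguably more transparent for the symmetric case; the Sylvester formulation generalizes more cleanly to non-symmetric $M$. Both approaches, as you correctly flag, leave the higher-order remainders as approximations rather than fully absorbed error terms, and the paper does the same.
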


\begin{proof}
We sketch the proof for part 2, as the proof of part 1 is simpler and can be done with the same arguments. We start by taking expectation on both sides of Theorem \ref{theorem:2}:
\begin{small}
\begin{align*}
\mathbb E[\|EE^T-\hat E\hat E^T\|]\le&\sqrt{\sum_{i=k+1}^d \lambda_i^{4\alpha}}+\mathbb E\sqrt{\sum_{i=1}^k (\lambda_i^{2\alpha}-\tilde\lambda_{i}^{2\alpha})^2}+\sqrt{2}\sum_{i=1}^k (\lambda_i^{2\alpha}-\lambda_{i+1}^{2\alpha})\mathbb E[\|\tilde U_{\cdot,1:i}^T U_{\cdot,i:n}\|],
\end{align*}
\end{small}
The first term involves only the spectrum, which is the same after taking expectation. The second term is upper bounded using Lemma \ref{lemma:bias1} below, derived from Weyl's theorem. We state the lemma, and leave the proof to the appendix.
\begin{lemma}\label{lemma:bias1}
Under the conditions of Theorem \ref{theorem:main},
\begin{small}
\begin{align*}
\mathbb E\sqrt{\sum_{i=1}^k (\lambda_i^{2\alpha}-\tilde\lambda_{i}^{2\alpha})^2}\le 2\sqrt{2n}\alpha\sigma\sqrt{\sum_{i=1}^k \lambda_i^{4\alpha-2}}
\end{align*}
\end{small}
\end{lemma}
For the last term, we use the Sylvester operator technique by \citet{stewart1990matrix}. Our result is presented in Lemma \ref{lemma:T}, and the proof of which is discussed in the appendix.
\begin{lemma}\label{lemma:T}
For two matrices $M$ and $\tilde M=M+Z$, denote their SVDs as $M=UDV^T$ and $\tilde M=\tilde U\tilde D \tilde V^T$. Write the left singular matrices in block form as $U=[U_0,U_1]$, $\tilde U=[\tilde U_0,\tilde U_1]$, and similarly partition $D$ into diagonal blocks $D_0$ and $D_1$. If the spectrum of $D_0$ and $D_1$ has separation
\begin{small}
\[\delta_k\overset{\Delta}{=}\min_{1\le i\le k,k< j\le n}\{\lambda_{i}-\lambda_{j}\}=\lambda_k-\lambda_{k+1}>0,\]
\end{small}
and $Z$ has iid, zero mean entries with variance $\sigma^2$, then
\begin{small}
\[
\mathbb E[\|\tilde U_1^TU_0\|]\le\sigma\sqrt{\sum_{\substack{1\le i\le k<j\le n}}(\lambda_{i}-\lambda_{j})^{-2}}\]
\end{small}
\end{lemma}
Now, collect results in Lemma \ref{lemma:bias1} and Lemma \ref{lemma:T}, we obtain an upper bound approximation for the PIP loss:
\begin{small}
\begin{align*}
&\mathbb E[\|EE^T-\hat E\hat E^T\|]\le \sqrt{\sum_{i=k+1}^d \lambda_i^{4\alpha}}+2\sqrt{2n}\alpha\sigma\sqrt{\sum_{i=1}^k \lambda_i^{4\alpha-2}}+\sqrt{2}\sum_{i=0}^k (\lambda_i^{2\alpha}-\lambda_{i+1}^{2\alpha})\sigma\sqrt{\sum_{r\le i<s}(\lambda_{r}-\lambda_{s})^{-2}}
\end{align*}
\end{small}
which completes the proof.
\end{proof}
Theorem \ref{theorem:main} shows that when dimensionality is too small, too much signal power (specifically, the spectrum of the signal $M$) is discarded, causing the first term to be too large (high bias). On the other hand, when dimensionality is too large, too much noise is included, causing the second and third terms to be too large (high variance). This explicitly answers the question of \citet{arora2016blog}.

\section{Two New Discoveries}\label{sec:discovery}
In this section, we introduce two more discoveries regarding the fundamentals of word embedding. The first is the relationship between the robustness of embedding and the exponent parameter $\alpha$, with a corollary that both skip-gram and GloVe are robust to over-fitting. The second is a dimensionality selection method by explicitly minimizing the PIP loss between the oracle and trained embeddings\footnote{Code can be found on GitHub: https://github.com/ziyin-dl/word-embedding-dimensionality-selection}. All our experiments use the Text8 corpus \citep{mahoney2011large}, a standard benchmark corpus used for various natural language tasks.

\subsection{Word Embeddings' Robustness to Over-Fitting Increases with Respect to $\alpha$} \label{subsec:robustness}
Theorem \ref{theorem:main} provides a good indicator for the sensitivity of the PIP loss with respect to over-parametrization. \citet{vu2011singular} showed that the approximations obtained by matrix perturbation theory are minimax tight. As $k$ increases, the bias term 
\begin{small}
$\sqrt{\sum_{i=k}^d \lambda_i^{4\alpha}}$
\end{small} 
decreases, which can be viewed as a \textit{zeroth-order} term because the arithmetic means of singular values are dominated by the large ones. As a result, when $k$ is already large (say, the singular values retained contain more than half of the total energy of the spectrum), increasing $k$ has only marginal effect on the PIP loss.

On the other hand, the variance terms demonstrate a \textit{first-order} effect, which contains the difference of the singular values, or singular gaps. Both variance terms grow at the rate of $\lambda_k^{2\alpha-1}$ with respect to the dimensionality $k$ (the analysis is left to the appendix). For small $\lambda_k$ (i.e. $\lambda_k <1$), the rate $\lambda_k^{2\alpha-1}$ increases as $\alpha$ decreases: when $\alpha<0.5$, this rate can be very large; When $0.5 \le \alpha \le 1$, the rate is bounded and sub-linear, in which case the PIP loss will be robust to over-parametrization. In other words, as $\alpha$ becomes larger, the embedding algorithm becomes less sensitive to over-fitting caused by the selection of an excessively large dimensionality $k$. To illustrate this point, we compute the PIP loss of word embeddings (approximated by Theorem \ref{theorem:main}) for the PPMI LSA algorithm, and plot them for different $\alpha$'s in Figure \ref{subfig:pip_bound_theory}. 
\begin{figure}[htb]
\centering
\hspace*{\fill}%
\begin{subfigure}[b]{0.32\textwidth}
\includegraphics[width=\textwidth]{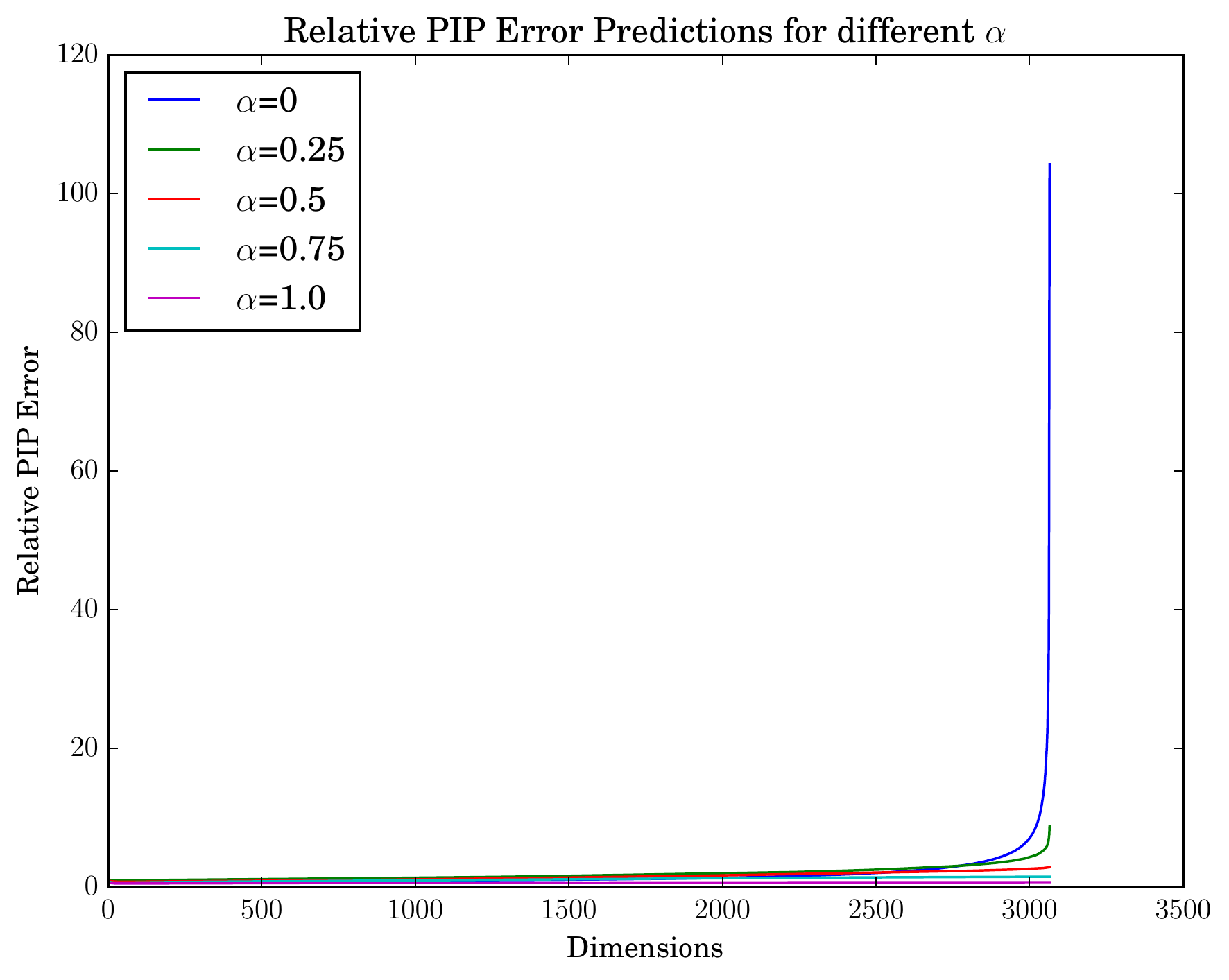}
            \caption[]%
            {{\small Theorem 3}} 
            \label{subfig:pip_bound_theory}
        \end{subfigure}
        \begin{subfigure}[b]{0.32\textwidth}           \includegraphics[width=\textwidth]{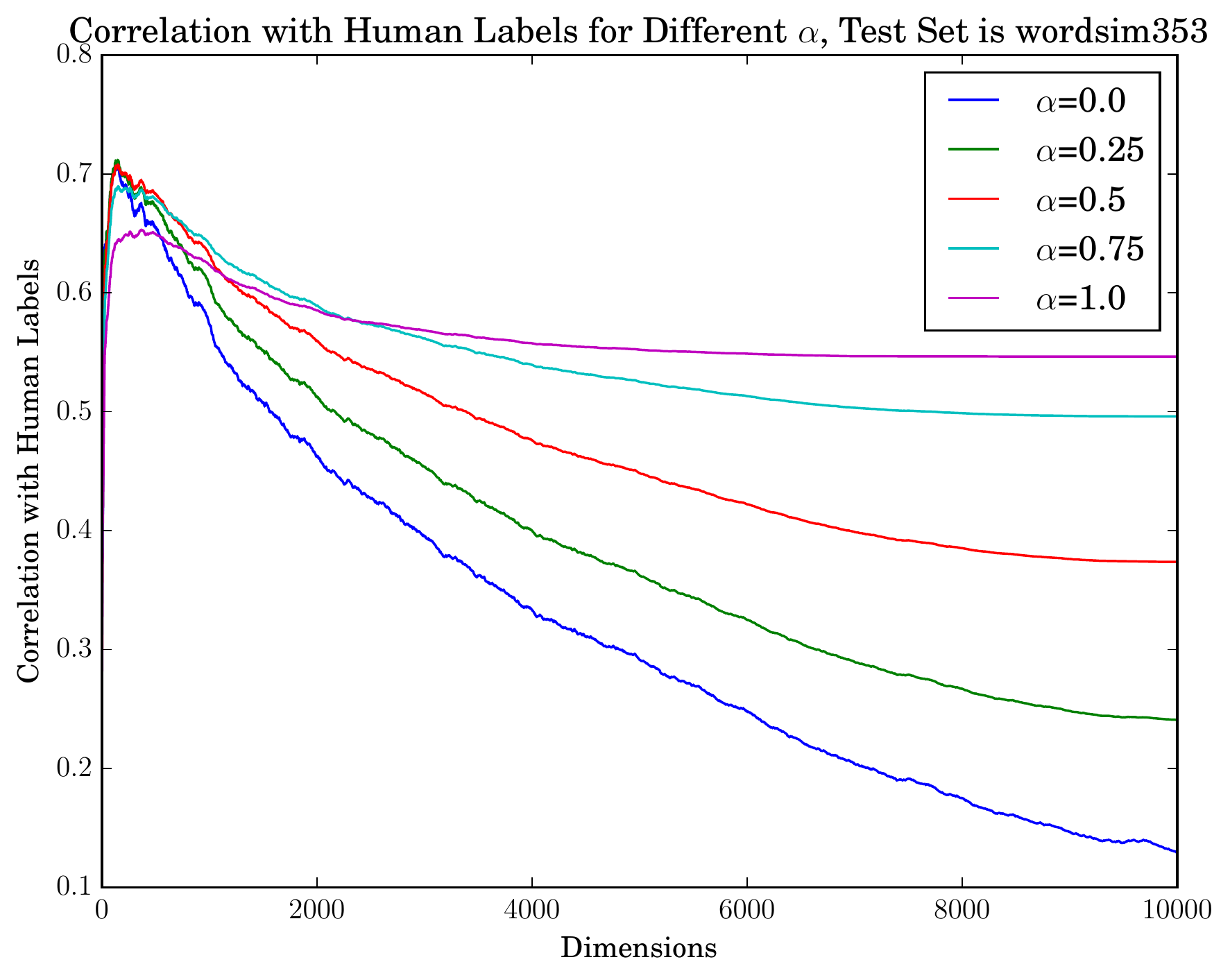}
            \caption[]%
            {{\small WordSim353 Test}}    
            \label{subfig:ws353}
        \end{subfigure}
\begin{subfigure}[b]{0.32\textwidth}           \includegraphics[width=\textwidth]{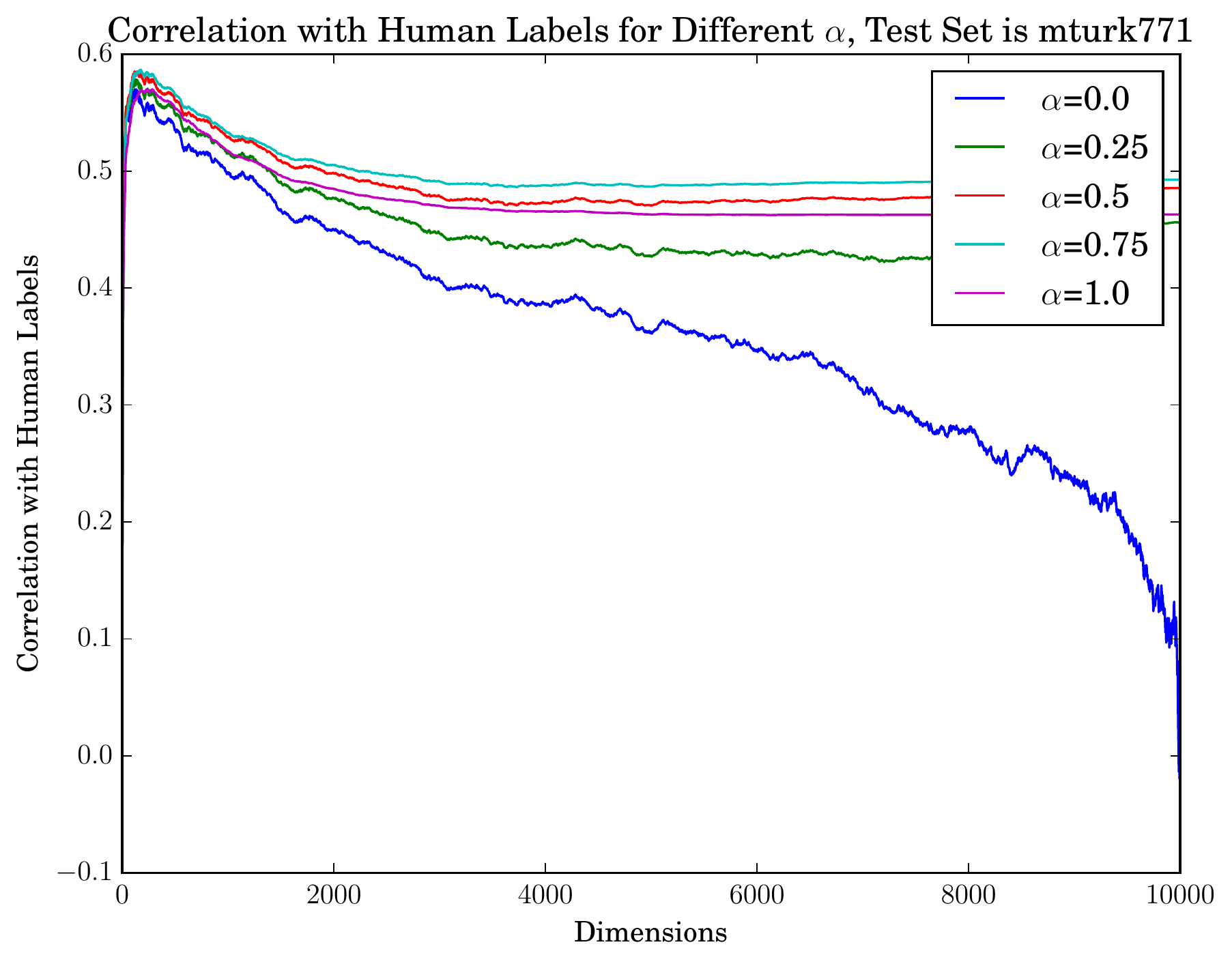}
            \caption[]%
            {{\small MTurk771 Test}}    
            \label{subfig:mturk771}
        \end{subfigure}
        \hspace*{\fill}%
\caption{Sensitivity to over-parametrization: theoretical prediction versus empirical results}
\label{fig:corr}
\end{figure}

Our discussion that over-fitting hurts algorithms with smaller $\alpha$ more can be empirically verified. Figure \ref{subfig:ws353} and \ref{subfig:mturk771} display the performances (measured by the correlation between vector cosine similarity and human labels) of word embeddings of various dimensionalities from the PPMI LSA algorithm, evaluated on two word correlation tests: WordSim353 \citep{wordsim353} and MTurk771 \citep{mturk771}. These results validate our theory: performance drop due to over-parametrization is more significant for smaller $\alpha$.

For the popular skip-gram \citep{mikolov2013exploiting} and GloVe \citep{pennington2014glove}, $\alpha$ equals $0.5$ as they are implicitly doing a symmetric factorization. Our previous discussion suggests that they are robust to over-parametrization. We empirically verify this by training skip-gram and GloVe embeddings. Figure \ref{fig:word2vec} shows the empirical performance on three word functionality tests. Even with extreme over-parametrization (up to $k=10000$), skip-gram still performs within 80\% to 90\% of optimal performance, for both analogy test \citep{mikolov2013efficient} and relatedness tests (WordSim353 \citep{wordsim353} and MTurk771 \citep{mturk771}). This observation holds for GloVe as well as shown in Figure \ref{fig:glove}.
\begin{figure}[htb]
\centering
\hspace*{\fill}%
\begin{subfigure}[b]{0.32\textwidth}
\includegraphics[width=\textwidth]{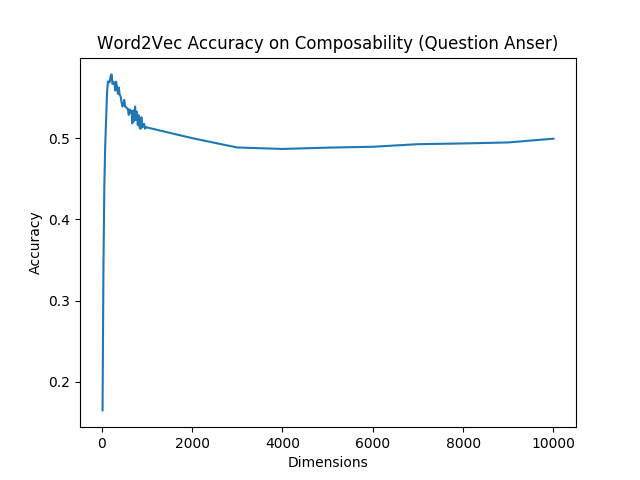}
            \caption[]%
            {{\small Google Analogy Test}} 
            \label{subfig:w2v_compositionality}
        \end{subfigure}
        \hfill
\begin{subfigure}[b]{0.32\textwidth}           \includegraphics[width=\textwidth]{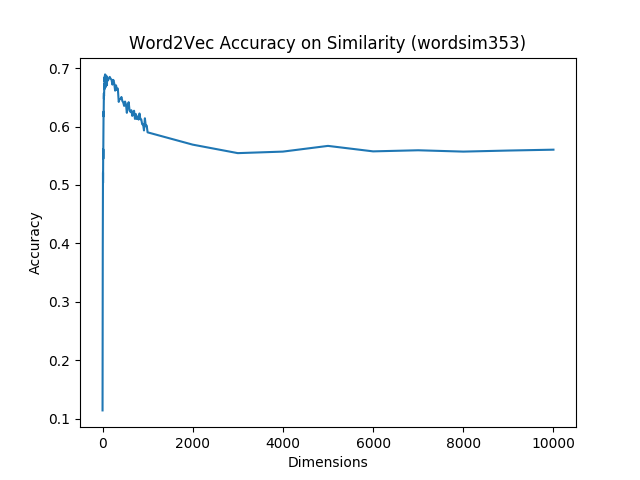}
            \caption[]%
            {{\small WordSim353 Test}} 
            \label{subfig:w2v_wordsim353}
\end{subfigure}
        \hspace*{\fill}%
\begin{subfigure}[b]{0.32\textwidth}           \includegraphics[width=\textwidth]{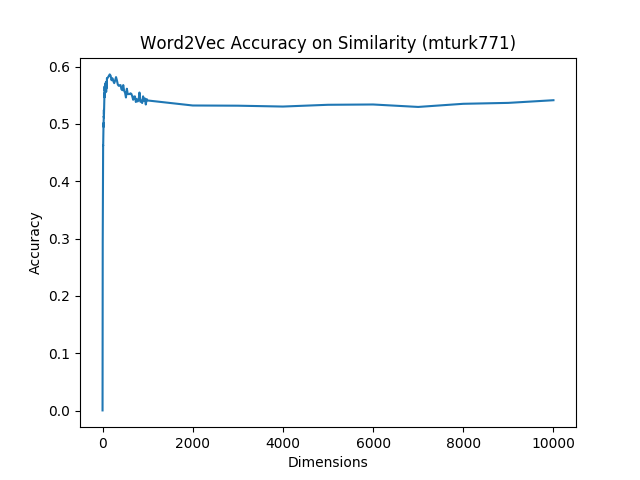}
            \caption[]%
            {{\small MTurk771 Test}} 
            \label{subfig:w2v_mturk771}
\end{subfigure}
\caption{skip-gram Word2Vec: over-parametrization does not significantly hurt performance}
\label{fig:word2vec}
\end{figure}

\begin{figure}[htb]
\centering
\hspace*{\fill}%
\begin{subfigure}[b]{0.32\textwidth}
\includegraphics[width=\textwidth]{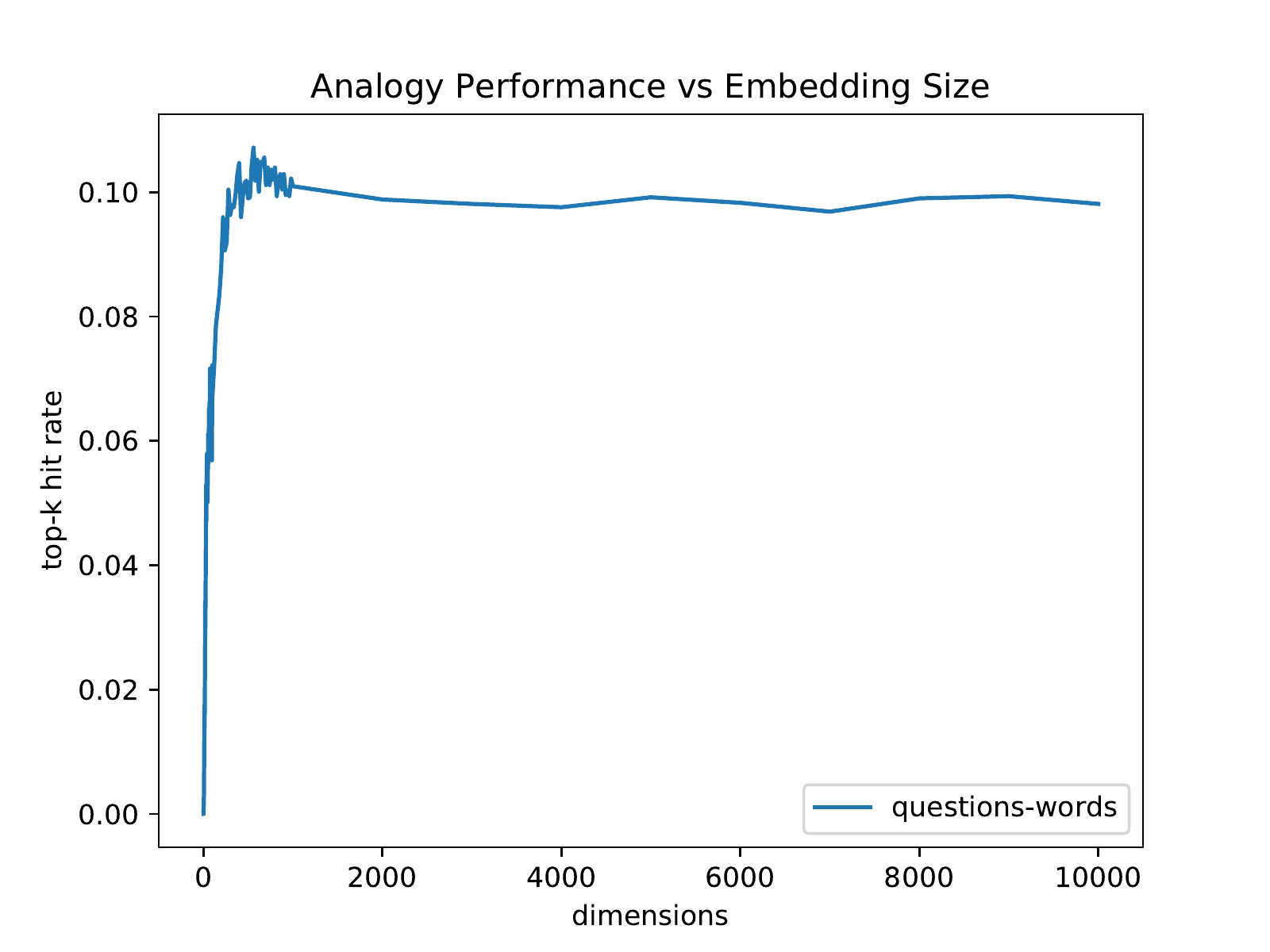}
            \caption[]%
            {{\small Google Analogy Test}} 
            \label{subfig:glove_analogy}
        \end{subfigure}
        \hfill
\begin{subfigure}[b]{0.32\textwidth}           \includegraphics[width=\textwidth]{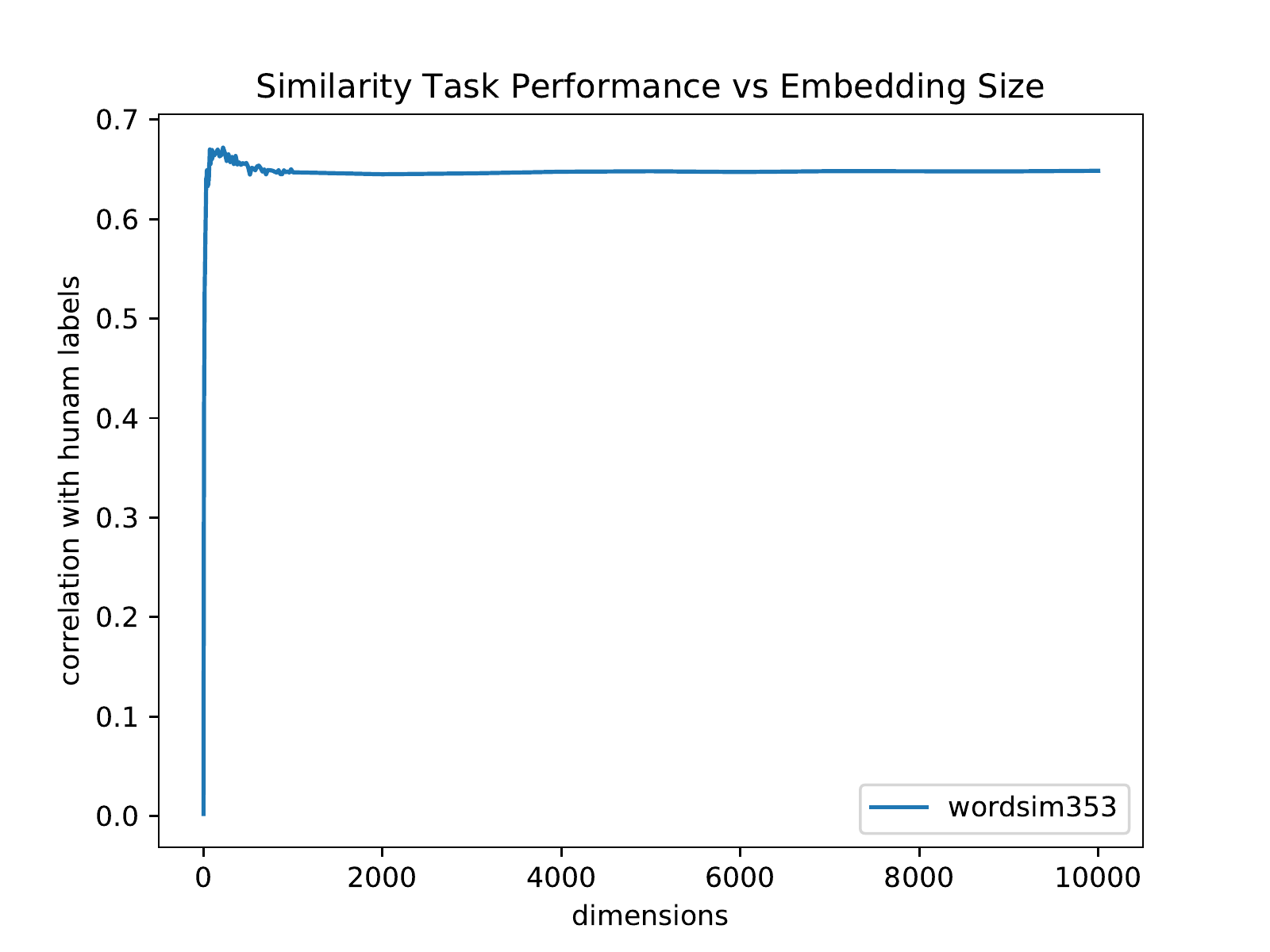}
            \caption[]%
            {{\small WordSim353 Test}} 
            \label{subfig:glove_wordsim353}
\end{subfigure}
\hspace*{\fill}%
\begin{subfigure}[b]{0.32\textwidth}           \includegraphics[width=\textwidth]{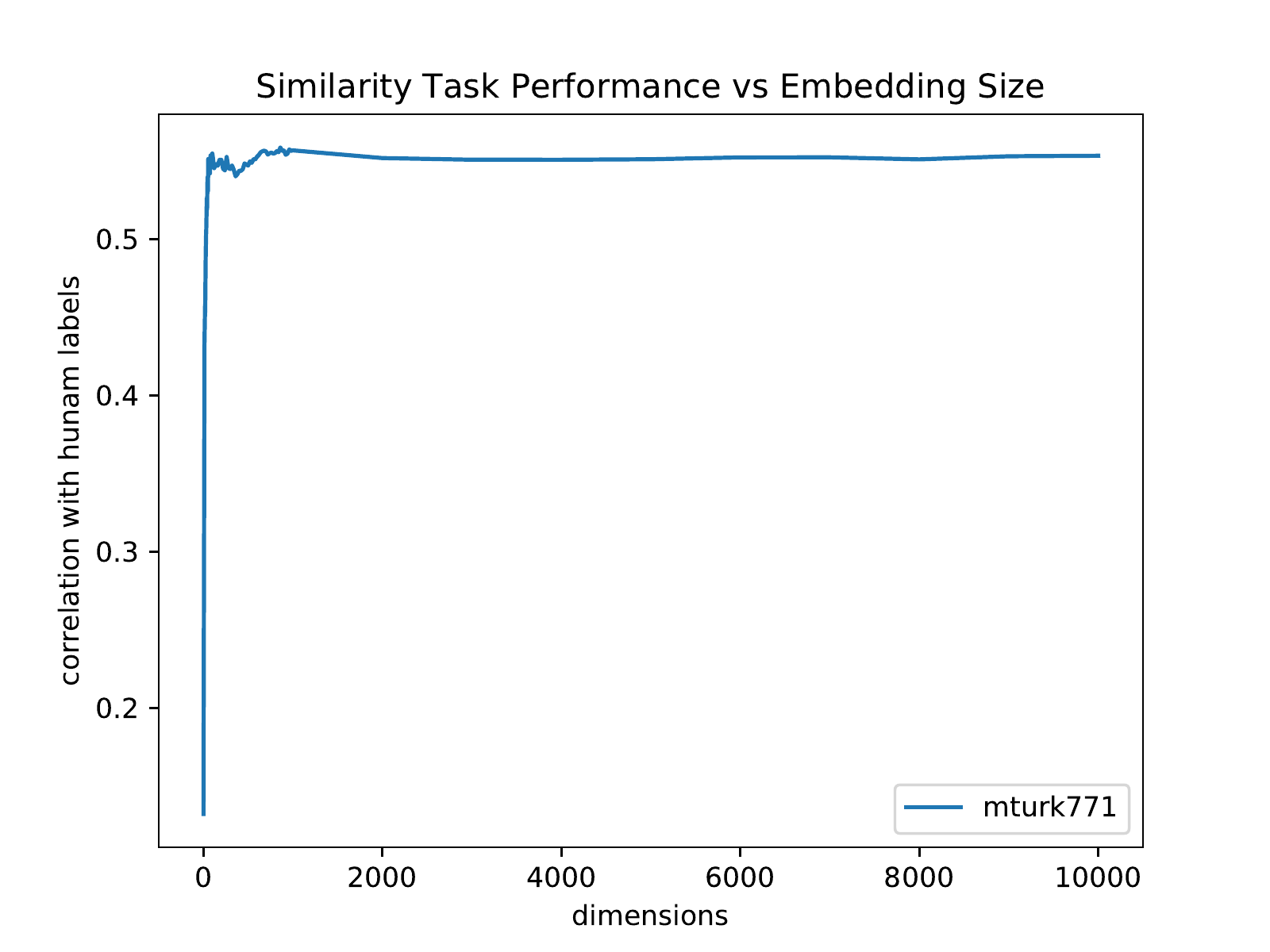}
            \caption[]%
            {{\small MTurk771 Test}} 
            \label{subfig:glove_mturk771}
\end{subfigure}
\caption{GloVe: over-parametrization does not significantly hurt performance}
\label{fig:glove}
\end{figure}

\subsection{Optimal Dimensionality Selection: Minimizing the PIP Loss}
The optimal dimensionality can be selected by finding the $k^*$ that minimizes the PIP loss between the trained embedding and the oracle embedding. With a proper estimate of the spectrum $D=\{\lambda\}_1^d$ and the variance of noise $\sigma^2$, we can use the approximation in Theorem \ref{theorem:main}. Another approach is to use the Monte-Carlo method where we simulate the clean signal matrix $M=UDV$ and the noisy signal matrix $\tilde M=M+Z$.
By factorizing $M$ and $\tilde M$, we can simulate the oracle embedding $E=UD^\alpha$ and trained embeddings $\hat E_k=\tilde U_{\cdot,1:k}\tilde D_{1:k,1:k}^\alpha$, in which case the PIP loss between them can be directly calculated. We found empirically that the Monte-Carlo procedure is more accurate as the simulated PIP losses concentrate tightly around their means across different runs. In the following experiments, we demonstrate that dimensionalities selected using the Monte-Carlo approach achieve near-optimal performances on various word intrinsic tests. As a first step, we demonstrate how one can obtain good estimates of $\{\lambda_i\}_1^d$ and $\sigma$ in \ref{subsec:estimation}.

\subsubsection{Spectrum and Noise Estimation from Corpus} \label{subsec:estimation}
\paragraph{Noise Estimation}\label{sec:noise_est}
We note that for most NLP tasks, the signal matrices are estimated by counting or transformations of counting, including taking log or normalization. This holds for word embeddings that are based on co-occurrence statistics, \textit{e.g.}, LSA, skip-gram and GloVe. 
We use a count-twice trick to estimate the noise: we randomly split the data into two equally large subsets, and get matrices $\tilde{M}_1=M+Z_1$, $\tilde{M}_2=M+Z_2$ in $\mathbb R^{m\times n}$, where $Z_1,Z_2$ are two independent copies of noise with variance $2\sigma^2$. Now, $\tilde{M}_1-\tilde{M}_2=Z_1-Z_2$ is a random matrix with zero mean and variance $4\sigma^2$. Our estimator is the sample standard deviation, a consistent estimator:
\begin{small}
\[\hat\sigma=\frac{1}{2\sqrt{mn}}\|\tilde{M}_1-\tilde{M}_2\|\]
\end{small}
\paragraph{Spectral Estimation}\label{sec:spectrum_est}
Spectral estimation is a well-studied subject in statistical literature \citep{cai2010singular,candes2009exact,kong2017spectrum}. For our experiments, we use the well-established universal singular value thresholding (USVT) proposed by \citet{chatterjee2015matrix}.
\begin{small}
\[\hat\lambda_i=(\tilde{\lambda}_i-2\sigma\sqrt{n})_+\]
\end{small}
where $\tilde{\lambda}_i$ is the $i$-th empirical singular value and $\sigma$ is the noise standard deviation. This estimator is shown to be minimax optimal \citep{chatterjee2015matrix}.

\subsubsection{Dimensionality Selection: LSA, Skip-gram Word2Vec and GloVe}
After estimating the spectrum $\{\lambda_i\}_1^d$ and the noise $\sigma$, we can use the Monte-Carlo procedure described above to estimate the PIP loss. For three popular embedding algorithms: LSA, skip-gram Word2Vec and GloVe, we find their optimal dimensionalities $k^*$ that minimize their respective PIP loss. We define the sub-optimality of a particular dimensionality $k$ as the additional PIP loss compared with $k^*$: $\|E_kE_k^T-EE^T\|-\|E_{k^*}{E_{k^*}}^T-EE^T\|$. In addition, we define the \textit{$p\%$ sub-optimal interval} as the interval of dimensionalities whose sub-optimality are no more than $p\%$ of that of a 1-D embedding. In other words, if $k$ is within the $p\%$ interval, then the PIP loss of a $k$-dimensional embedding is at most $p\%$ worse than the optimal embedding. We show an example in Figure \ref{fig:level_sets}.

\paragraph{LSA with PPMI Matrix} For the LSA algorithm, the optimal dimensionalities and sub-optimal intervals around them ($5\%$, $10\%$, $20\%$ and $50\%$) for different $\alpha$ values are shown in Table \ref{table:dimensions}. Figure \ref{fig:level_sets} shows how PIP losses vary across different dimensionalities. 
From the shapes of the curves, we can see that models with larger $\alpha$ suffer less from over-parametrization, as predicted in Section \ref{subsec:robustness}.

We further cross-validated our theoretical results with intrinsic functionality tests on word relatedness. The empirically optimal dimensionalities that achieve highest correlations with human labels for the two word relatedness tests (WordSim353 and MTurk777) lie close to the theoretically selected $k^*$'s. All of them fall in the 5\% interval except when $\alpha=0$, in which case they fall in the 20\% sub-optimal interval.

\begin{figure}[htb]
\centering
\hspace*{\fill}%
\begin{subfigure}[b]{0.32\textwidth}
\includegraphics[width=\textwidth]{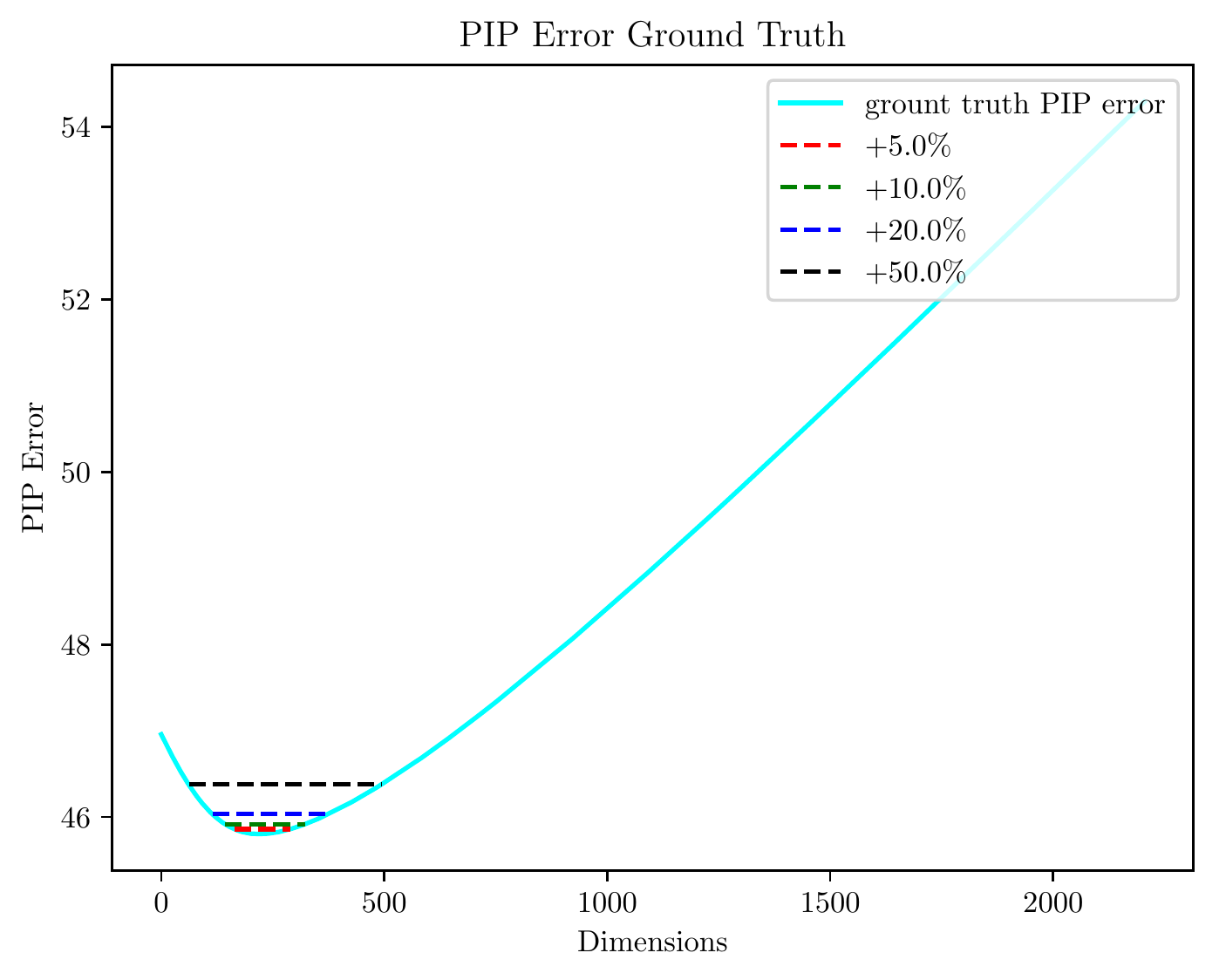}
            \caption[]%
            {{\small $\alpha=0$}} 
            \label{subfig:pip_gt_0}
\end{subfigure}
\begin{subfigure}[b]{0.32\textwidth}
\includegraphics[width=\textwidth]{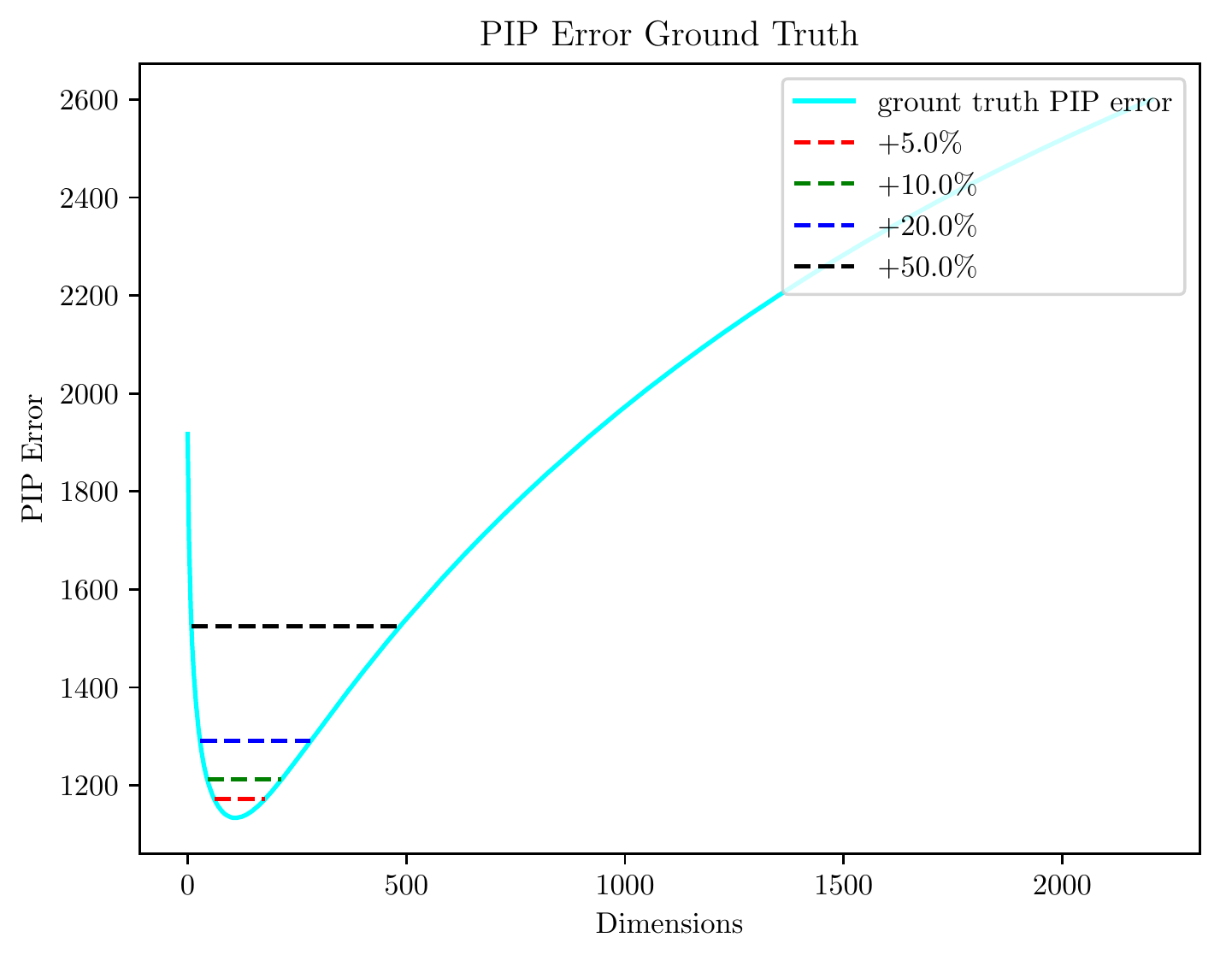}
            \caption[]%
            {{\small $\alpha=0.5$}} 
            \label{subfig:pip_gt_05}
        \end{subfigure}
\begin{subfigure}[b]{0.32\textwidth}         \includegraphics[width=\textwidth]{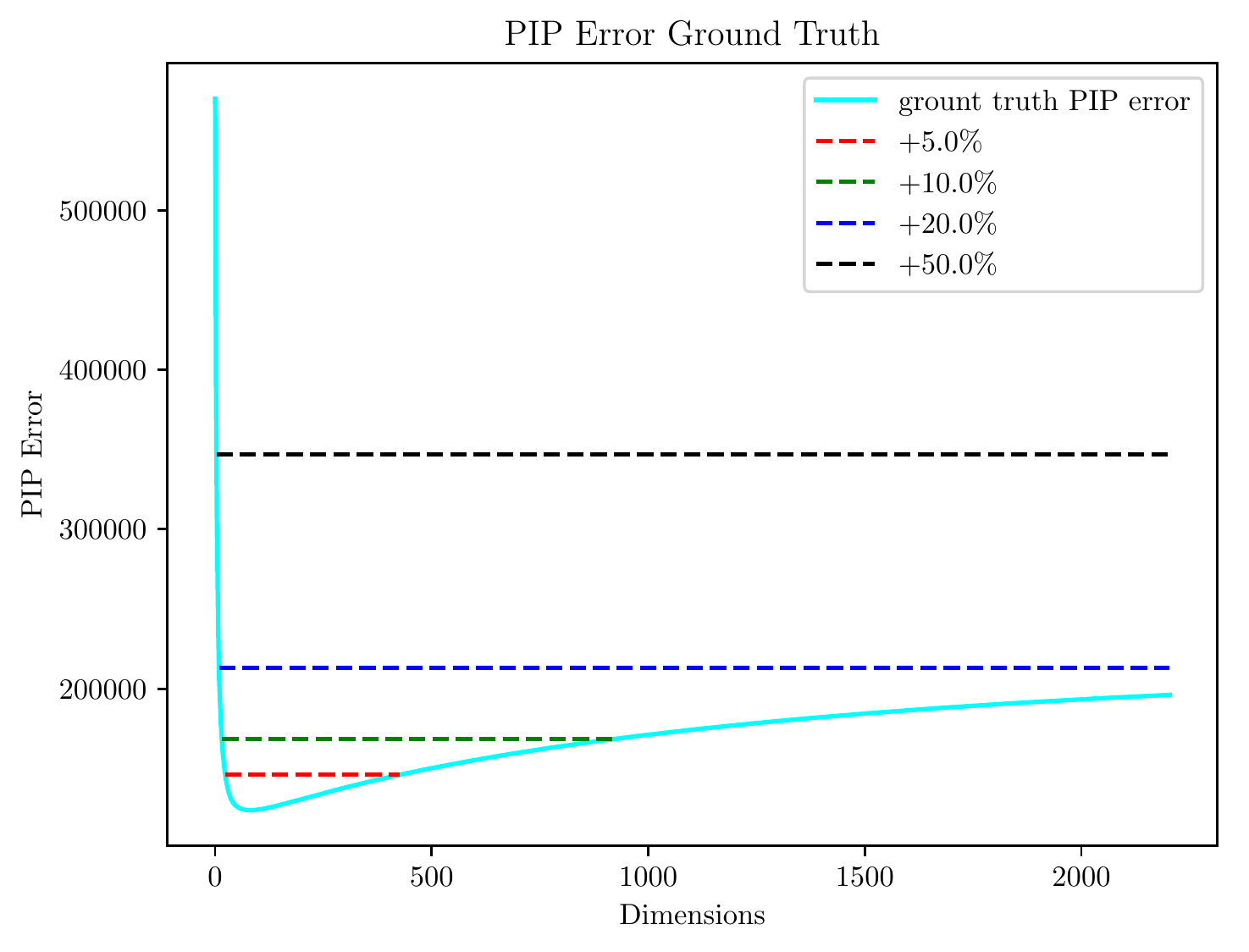}
            \caption[]%
            {{\small $\alpha=1$}}    
            \label{subfig:pip_gt_1}
        \end{subfigure}
        \hspace*{\fill}%
\caption{PIP loss and its bias-variance trade-off allow for explicit dimensionality selection for LSA}
\label{fig:level_sets}
\end{figure}

\captionof{table}{Optimal dimensionalities for word relatedness tests are close to PIP loss minimizing ones} \label{table:dimensions} 
\resizebox{\columnwidth}{!}{%
\begin{tabular}{ |c| c |c|c|c| c|c|c|}
 \hline
$\alpha$  & $\text{PIP} \arg\min$  & 5\% interval  & 10\% interval & 20\% interval & 50\% interval  & WS353 opt. & MT771 opt.\\
 \hline
 $0$  &  214  &  [164,289]  &  [143,322]  &  [115,347]  &  [62,494]  &  127  &  116 \\
 \hline
 $0.25$  &  138  &  [95,190]  &  [78,214]  &  [57,254]  &  [23,352]  &  146  &  116 \\
 \hline
 $0.5$  &  108  &  [61,177]  &  [45,214]  &  [29,280]  &  [9,486]  &  146  &  116 \\
 \hline
 $0.75$  &  90  &  [39,206]  &  [27,290]  &  [16,485]  &  [5,1544]  &  155  &  176 \\
 \hline
 $1$ &  82 &  [23,426] &  [16,918] &  [9,2204] &  [3,2204] &  365 &  282\\
\hline
\end{tabular}
}

\paragraph{Word2Vec with Skip-gram}
For skip-gram, we use the PMI matrix as its signal matrix \citep{levy2014neural}. On the theoretical side, the PIP loss-minimizing dimensionality $k^*$ and the sub-optimal intervals ($5\%$, $10\%$, $20\%$ and $50\%$) are reported in Table \ref{tab:word2vec}. On the empirical side, the optimal dimensionalities for WordSim353, MTurk771 and Google analogy tests are 56, 102 and 220 respectively for skip-gram. They agree with the theoretical selections: one is within the 5\% interval and the other two are within the 10\% interval.

\captionof{table}{PIP loss minimizing dimensionalities and intervals for Skip-gram on Text8 corpus} \label{tab:word2vec}
\resizebox{\columnwidth}{!}{%
\begin{tabular}{ | c|c| c| c| c|c|c|c|c|}
 \hline
Surrogate Matrix & $ \arg\min$ & $+5\%$ interval & $+10\%$ interval & $+20\%$ interval & $+50\%$ interval & WS353 & MT771 & Analogy\\
  \hline
Skip-gram (PMI) & 129 & [67,218] & [48,269] & [29,365] & [9,679] &56&102&220\\
\hline
\end{tabular}
}

\paragraph{GloVe}
For GloVe, we use the log-count matrix as its signal matrix \citep{pennington2014glove}. On the theoretical side, the PIP loss-minimizing dimensionality $k^*$ and sub-optimal intervals ($5\%$, $10\%$, $20\%$ and $50\%$) are reported in Table \ref{tab:glove}. On the empirical side, the optimal dimensionalities for WordSim353, MTurk771 and Google analogy tests are 220
, 860
, and 
560. Again, they agree with the theoretical selections: two are within the 5\% interval and the other is within the 10\% interval.

\captionof{table}{PIP loss minimizing dimensionalities and intervals for GloVe on Text8 corpus} \label{tab:glove}
\resizebox{\columnwidth}{!}{%
\begin{tabular}{ | c|c| c| c| c|c|c|c|c|}
\hline
Surrogate Matrix & $ \arg\min$ & $+5\%$ interval & $+10\%$ interval & $+20\%$ interval & $+50\%$ interval & WS353 & MT771 & Analogy\\
  \hline
GloVe (log-count)& 719 & [290,1286] & [160,1663] & [55,2426] & [5,2426] &220&860&560\\
\hline
\end{tabular}
}

\vspace{7pt}

The above three experiments show that our method is a powerful tool in practice: the dimensionalities selected according to empirical grid search agree with the PIP-loss minimizing criterion, which can be done simply by knowing the spectrum and noise standard deviation.

\section{Conclusion}
In this paper, we present a theoretical framework for understanding vector embedding dimensionality. We propose the PIP loss, a metric of dissimilarity between word embeddings. We focus on embedding algorithms that can be formulated as explicit or implicit matrix factorizations including the widely-used LSA, skip-gram and GloVe, and reveal a bias-variance trade-off in dimensionality selection using matrix perturbation theory. With this theory, we discover the robustness of word embeddings trained from these algorithms and its relationship to the exponent parameter $\alpha$. In addition, we propose a dimensionality selection procedure, which consists of estimating and minimizing the PIP loss. This procedure is theoretically justified, accurate and fast. All of our discoveries are concretely validated on real datasets.

\paragraph{Acknoledgements} The authors would like to thank John Duchi, Will Hamilton, Dan Jurafsky, Percy Liang, Peng Qi and Greg Valiant for the helpful discussions. We thank Balaji Prabhakar, Pin Pin Tea-mangkornpan and Feiran Wang for proofreading an earlier version and for their suggestions. Finally, we thank the anonymous reviewers for their valuable feedback and suggestions.

\bibliography{example_paper}
\bibliographystyle{plainnat}

\newpage
\section{Appendix}
\subsection{Relation between the PIP Loss and Word Analogy, Relatedness}
We need to show that if the PIP loss is close to 0, \textit{i.e.} $\|EE^T-FF^T\|\approx 0$, then $F\approx ET$ for some unitary matrix $T$. Let $E=UDV^T$ and $F=X\Lambda Y^T$ be the SVDs, we claim that we only need to show $U\approx X$ and $D\approx \Lambda$. The reason is, if we can prove the claim, then $EVY^T\approx F$, or $T=VY^T$ is the desired unitary transformation. We prove the claim by induction, assuming the singular values are simple. Note the PIP loss equals
\begin{small}
\[\|EE^T-FF^T\|=\|UD^2U^T-X\Lambda^2X^T\|\]
\end{small}
where $\Lambda=diag(\lambda_i)$ and $D=diag(d_i)$. Without loss of generality, suppose $\lambda_1\ge d_1$. Now let $x_1$ be the first column of $X$, namely, the singular vector corresponding to the largest singular value $\lambda_{1}$. Regard $EE^T-FF^T$ as an operator, we have
\begin{small}
\begin{align*}
\|FF^T x_1\|-\|EE^T x_1\|&\le 
\|(EE^T-FF^T) x_1\|\\
&\le \|EE^T-FF^T\|_{op}\\
&\le \|EE^T-FF^T\|_{F}
\end{align*}
\end{small}
Now, notice
\begin{small}
\[\|FF^Tx_1\|= \|X\Lambda^2X^Tx_1\|=\lambda_1^2,\]
\begin{equation}
\|EE^Tx_1\| =\|UD^2U^Tx_1\|=\sum_{i=1}^n d_i^2\langle u_i,x_1\rangle\le d_1^2
\label{eq:pf1}
\end{equation}
\end{small}
So $0\le \lambda_1^2-d_1^2\le \|EE^T-FF^T\|\approx 0$. As a result, we have
\begin{enumerate}
\item $d_1\approx\lambda_1$
\item $u_1\approx x_1$, in order to achieve equality in eqn (\ref{eq:pf1})
\end{enumerate}
This argument can then be repeated using the Courant-Fischer minimax characterization for the rest of the singular pairs. As a result, we showed that $U\approx X$ and $D\approx \Lambda$, and hence the embedding $F$ can indeed be obtained by applying a unitary transformation on $E$, or $F\approx ET$ for some unitary $T$, which ultimately leads to the fact that analogy and relatedness are similar, as they are both invariant under unitary operations.

\begin{lemma}\label{lemma:1}
For orthogonal matrices $X_0\in\mathbb{R}^{n\times k},Y_1\in\mathbb{R}^{n\times (n-k)}$, the SVD of their inner product equals
\[\text{SVD}(X_0^TY_1)=U_0\sin(\Theta)\tilde V_1^T\]
where $\Theta$ are the principal angles between $X_0$ and $Y_0$, the orthonormal complement of $Y_1$.
\end{lemma}

\subsection{Proof of Lemma \ref{lemma:1}}
\begin{proof}
We prove this lemma by obtaining the eigendecomposition of 
$X_0^TY_1(X_0^TY_1)^T$:
\begin{align*}
X_0^TY_1Y_1^TX_0&=X_0^T(I-Y_0Y_0^T)X_0\\
&=I-U_0\cos^2(\Theta)U_0^T\\
&=U_0\sin^2(\Theta)U_0^T
\end{align*}
Hence the $X_0^TY_1$ has singular value decomposition of $U_0\sin(\Theta)\tilde V_1^T$ for some orthogonal $\tilde{V_1}$.
\end{proof}

\subsection{Proof of Lemma \ref{lemma:2}}
\begin{proof}
Note $Y_0=UU^TY_0=U(
\begin{bmatrix}
    X_0^T  \\
    X_1^T
\end{bmatrix}Y_0)$, so
\begin{small}
\begin{align*}
Y_0Y_0^T&=U(\begin{bmatrix}
    X_0^T  \\
    X_1^T
\end{bmatrix}Y_0Y_0^T
\begin{bmatrix}
    X_0 & X_1
\end{bmatrix})U^T\\
&=U
\begin{bmatrix}
    X_0^TY_0Y_0^TX_0 & X_0^TY_0Y_0^TX_1 \\
    X_1^TY_0Y_0^TX_0 & X_1^TY_0Y_0^TX_1
\end{bmatrix}
U^T
\end{align*}
\end{small}
Let $X_0^TY_0=U_0\cos(\Theta)V_0^T$, $Y_0^TX_1=V_0\sin(\Theta)\tilde U_1^T$ by Lemma \ref{lemma:1}. For any unit invariant norm,
\begin{small}
\begin{align*}
&\norm{Y_0Y_0^T-X_0X_0^T}\\
=&\norm{U
(\begin{bmatrix}
    X_0^TY_0Y_0^TX_0 & X_0^TY_0Y_0^TX_1 \\
    X_1^TY_0Y_0^TX_0 & X_1^TY_0Y_0^TX_1
\end{bmatrix}-
\begin{bmatrix}
    I & 0 \\
    0 & 0
\end{bmatrix})
U^T}\\
=&\norm{
\begin{bmatrix}
    X_0^TY_0Y_0^TX_0 & X_0^TY_0Y_0^TX_1 \\
    X_1^TY_0Y_0^TX_0 & X_1^TY_0Y_0^TX_1
\end{bmatrix}-
\begin{bmatrix}
    I & 0 \\
    0 & 0
\end{bmatrix}
}\\
=&\norm{
\begin{bmatrix}
    U_0\cos^2(\Theta)U_0^T & U_0\cos(\Theta)\sin(\Theta)\tilde U_1^T \\
    \tilde U_1\cos(\Theta)\sin(\Theta)U_0^T & \tilde U_1\sin^2(\Theta)\tilde U_1^T
\end{bmatrix}-
\begin{bmatrix}
    I & 0 \\
    0 & 0
\end{bmatrix}
}\\
=&\norm{
\begin{bmatrix}
    U_0 & 0 \\
    0 & \tilde U_1
\end{bmatrix}
\begin{bmatrix}
    -\sin^2(\Theta) & \cos(\Theta)\sin(\Theta) \\
    \cos(\Theta)\sin(\Theta) & \sin^2(\Theta)
\end{bmatrix}
\begin{bmatrix}
    U_0 & 0 \\
    0 & \tilde U_1
\end{bmatrix}^T
}\\
=&\norm{
\begin{bmatrix}
    -\sin^2(\Theta) & \cos(\Theta)\sin(\Theta) \\
    \cos(\Theta)\sin(\Theta) & \sin^2(\Theta)
\end{bmatrix}
}\\
=&\norm{
\begin{bmatrix}
\sin(\Theta) & 0\\
0 & \sin(\Theta)
\end{bmatrix}
\begin{bmatrix}
    -\sin(\Theta) & \cos(\Theta) \\
    \cos(\Theta) & \sin(\Theta)
\end{bmatrix}
}\\
=&\norm{
\begin{bmatrix}
\sin(\Theta) & 0\\
0 & \sin(\Theta)
\end{bmatrix}}
\end{align*}
\end{small}
On the other hand by the definition of principal angles,
\[\|X_0^TY_1\|=\|\sin(\Theta)\|\]
So we established the lemma. Specifically, we have
\begin{enumerate}
\item $\|X_0X_0^T-Y_0Y_0^T\|_2=\|X_0^TY_1\|_2$
\item $\|X_0X_0^T-Y_0Y_0^T\|_F=\sqrt{2}\|X_0^TY_1\|_F$
\end{enumerate}
\end{proof}
Without loss of soundness, we omitted in the proof sub-blocks of identities or zeros for simplicity. Interested readers can refer to classical matrix CS-decomposition texts, for example \citet{stewart1990matrix,paige1994history, davis1970rotation, kato2013perturbation}, for a comprehensive treatment of this topic.

\subsection{Proof of Theorem \ref{theorem:2}}
\begin{proof}
Let $E=X_0D_{0}^\alpha$ and $\hat E=Y_0 \tilde D_{0}^\alpha$, where for notation simplicity we denote $D_0=D_{1:d,1:d}=diag(\lambda_1, \cdots, \lambda_d)$ and $\tilde D_0=\tilde D_{1:k,1:k}=diag(\tilde\lambda_1, \cdots, \tilde\lambda_k)$, with $k\le d$. Observe $D_0$ is diagonal and the entries are in descending order. As a result, we can write $D_0$ as a telescoping sum:
\[D_0^\alpha=\sum_{i=1}^k (\lambda_i^\alpha-\lambda_{i+1}^{\alpha})I_{i,i}\]
where $I_{i,i}$ is the $i$ by $i$ dimension identity matrix and $\lambda_{d+1}=0$ is adopted. As a result, we can telescope the difference between the PIP matrices. Note we again split $X_0\in\mathbb{R}^{n\times d}$ into $X_{0,0}\in\mathbb{R}^{n\times k}$ and $X_{0,1}\in\mathbb{R}^{n\times (d-k)}$, together with $D_{0,0}=diag(\lambda_1,\cdots,\lambda_k)$ and $D_{0,1}=diag(\lambda_{k+1},\cdots,\lambda_d)$, to match the dimension of the trained embedding matrix.
\begin{small}
\begin{alignat*}{3}
&\ &&\|EE^T-\hat E\hat E^T\|\\
&=&&\|X_{0,1}D_{0,1}^{2\alpha}X_{0,1}^T-Y_0\tilde D_0^{2\alpha}Y_0^T+X_{0,2}D_{0,2}^{2\alpha}X_{0,2}^T\|\\
&\le&&\|X_{0,2}D_{0,2}^{2\alpha}X_{0,2}^T\|+\|X_{0,1}D_{0,1}^{2\alpha}X_{0,1}^T-Y_0\tilde D_0^{2\alpha}Y_0^T\|\\
&=&&\|X_{0,2}D_{0,2}^{2\alpha}X_{0,2}^T\|\\
& &&+\|X_{0,1}D_{0,1}^{2\alpha}X_{0,1}^T-Y_0D_{0,1}^{2\alpha}Y_0^T+Y_0D_{0,1}^{2\alpha}Y_0^T-Y_0\tilde D_0^{2\alpha}Y_0^T\|\\
&\le&&\|X_{0,2}D_{0,2}^{2\alpha}X_{0,2}^T\|+\|X_{0,1}D_{0,1}^{2\alpha}X_{0,1}^T-Y_0D_{0,1}^{2\alpha}Y_0^T\|\\
& &&+\|Y_0D_{0,1}^{2\alpha}Y_0^T-Y_0\tilde D_0^{2\alpha}Y_0^T\|
\end{alignat*}
\end{small}
We now approximate the above 3 terms separately.
\begin{enumerate}
\item Term 1 can be computed directly:
\begin{small}
\begin{align*}
\|X_{0,2}D_{0,2}^{2\alpha}X_{0,2}^T\|&=\sqrt{\|\sum_{i=k+1}^d \lambda_i^{2\alpha}x_{\cdot, i}x_{\cdot, i}^T\|^2}=\sqrt{\sum_{i=k+1}^d \lambda_i^{4\alpha}}\\
\end{align*}
\end{small}

\item We bound term 2 using the telescoping observation and lemma \ref{lemma:2}:
\begin{small}
\begin{align*}
&\|X_{0,1}D_{0,1}^{2\alpha}X_{0,1}^T-Y_0D_{0,1}^{2\alpha}Y_0^T\|\\
&=\|\sum_{i=1}^k (\lambda_i^{2\alpha}-\lambda_{i+1}^{2\alpha})(X_{\cdot,1:i}X_{\cdot,1:i}^T-Y_{\cdot,1:i}Y_{\cdot,1:i}^T)\|\\
&\le\sum_{i=1}^k (\lambda_i^{2\alpha}-\lambda_{i+1}^{2\alpha})\|X_{\cdot,1:i}X_{\cdot,1:i}^T-Y_{\cdot,1:i}Y_{\cdot,1:i}^T\|\\
&=\sqrt{2}\sum_{i=1}^k (\lambda_i^{2\alpha}-\lambda_{i+1}^{2\alpha})\|Y_{\cdot,1:i}^T X_{\cdot,i:n}\|
\end{align*}
\end{small}
\item Third term:
\begin{small}
\begin{align*}
\|Y_0D_{0,1}^{2\alpha}Y_0^T-Y_0\tilde D_0^{2\alpha}Y_0^T\|&=\sqrt{\|\sum_{i=1}^k (\lambda_i^{2\alpha}-\tilde\lambda_{i}^{2\alpha})Y_{\cdot,i}Y_{\cdot,i}^T\|^2}\\
&=\sqrt{\sum_{i=1}^k (\lambda_i^{2\alpha}-\tilde\lambda_{i}^{2\alpha})^2}
\end{align*}
\end{small}
\end{enumerate}
Collect all the terms above, we arrive at an approximation for the PIP discrepancy:
\begin{small}
\begin{align*}
\|EE^T-\hat E\hat E^T\|&\le\sqrt{\sum_{i=k+1}^d \lambda_i^{4\alpha}}+\sqrt{\sum_{i=1}^k (\lambda_i^{2\alpha}-\tilde\lambda_{i}^{2\alpha})^2}\\
&+\sqrt{2}\sum_{i=1}^k (\lambda_i^{2\alpha}-\lambda_{i+1}^{2\alpha})\|Y_{\cdot,1:i}^T X_{\cdot,i:n}\|
\end{align*}
\end{small}
\end{proof}

\subsection{Proof of Lemma \ref{lemma:bias1}}
\begin{proof}
To bound the term $\sqrt{\sum_{i=1}^k (\lambda_i^{2\alpha}-\tilde\lambda_{i}^{2\alpha})^2}$, we use a classical result \citep{weyl1912asymptotische,mirsky1960symmetric}.

\begin{theorem}[Weyl]\label{theorem:weyl}
Let $\{\lambda_i\}_{i=1}^n$ and $\{\tilde\lambda_i\}_{i=1}^n$ be the spectrum of $M$ and $\tilde M=M+Z$, where we include 0 as part of the spectrum. Then
\[\max_{i}|\lambda_i-\tilde\lambda_i|\le \|Z\|_2\]
\end{theorem}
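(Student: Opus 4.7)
The plan is to establish the inequality via the Courant--Fischer minimax characterization of eigenvalues of a symmetric matrix. Recall that for any symmetric $A\in\mathbb R^{n\times n}$ with ordered eigenvalues $\lambda_1(A)\ge\cdots\ge\lambda_n(A)$,
\[ \lambda_i(A) \;=\; \max_{\substack{V\subseteq\mathbb R^n\\ \dim V=i}}\;\min_{\substack{x\in V\\ \|x\|_2=1}}\; x^T A x. \]
Since $M$ is symmetric and $\tilde M = M+Z$, their difference $Z$ is symmetric as well, so this characterization applies to both spectra (padded with zero eigenvalues to length $n$ as stipulated in the statement).

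The first step is a pointwise quadratic-form bound: for any unit vector $x$, one has $|x^TZx|\le\|Z\|_2$ by the variational definition of the operator norm on symmetric matrices, whence
\[ x^T\tilde Mx \;\ge\; x^T Mx - \|Z\|_2 \]
uniformly over unit $x$. The second step is to propagate this inequality through the minimax: fix any $i$-dimensional subspace $V$, take the infimum over unit $x\in V$ on both sides, and then the supremum over $V$. By Courant--Fischer, this yields $\tilde\lambda_i\ge\lambda_i-\|Z\|_2$. The third step is symmetry: writing $M=\tilde M+(-Z)$ and using $\|-Z\|_2=\|Z\|_2$ gives the reverse $\lambda_i\ge\tilde\lambda_i-\|Z\|_2$. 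Combining the two produces $|\lambda_i-\tilde\lambda_i|\le\|Z\|_2$, and taking the maximum over $i$ finishes the proof.

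There is no genuine obstacle here: the entire argument is a routine consequence of the variational principle, and the only mild bookkeeping is the convention of padding both spectra with zeros so that they share a common index set $\{1,\dots,n\}$. An alternative route, which I would mention but not adopt, is the symmetric-dilation trick that reduces the analogous statement for singular values of rectangular matrices to the symmetric case (useful downstream in Lemma \ref{lemma:bias1} if one ever needs to apply Weyl to non-symmetric signal matrices). For the symmetric $M$ of Theorem \ref{theorem:main}, however, the minimax proof above is the cleanest and most direct.
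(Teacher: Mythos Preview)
Your argument via the Courant--Fischer minimax principle is correct and is the standard proof of Weyl's inequality; the pointwise bound $|x^TZx|\le\|Z\|_2$, the propagation through the $\min$--$\max$, and the symmetry step are all valid.

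Regarding comparison with the paper: the paper does not actually prove this statement. It is introduced there as a classical result with a citation to \citet{weyl1912asymptotische} and \citet{mirsky1960symmetric}, and is then invoked as a black box inside the proof of Lemma~\ref{lemma:bias1}. So there is nothing to compare at the level of technique --- your proposal supplies a proof where the paper simply quotes the theorem. Your aside about the symmetric-dilation trick is a reasonable remark for the rectangular case, but, as you note, it is unnecessary here since $M$ is assumed symmetric in Theorem~\ref{theorem:main}.
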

\begin{theorem}[Mirsky-Wielandt-Hoffman]\label{theorem:mirsky}
Let $\{\lambda_i\}_{i=1}^n$ and $\{\tilde\lambda_i\}_{i=1}^n$ be the spectrum of $M$ and $\tilde M=M+Z$. Then
\[(\sum_{i=1}^n|\lambda_i-\tilde\lambda_i|^p)^{1/p}\le \|Z\|_{S_p}\]
\end{theorem}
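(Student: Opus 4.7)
The plan is to handle $p=2$ (the Hoffman-Wielandt case) by a direct Frobenius-norm expansion and then lift to general $p$ via a majorization argument. For the base case, I would start from the spectral decompositions $M=U\Lambda U^T$ and $\tilde M=V\tilde\Lambda V^T$ with eigenvalues sorted in decreasing order, and expand
\[\|Z\|_F^2 = \|M-\tilde M\|_F^2 = \mathrm{tr}(M^2) + \mathrm{tr}(\tilde M^2) - 2\,\mathrm{tr}(M\tilde M).\]
Setting $Q = U^T V$, the cross-term becomes $\mathrm{tr}(\Lambda Q\tilde\Lambda Q^T) = \sum_{i,j}\lambda_i\tilde\lambda_j |Q_{ij}|^2$. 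The matrix $S=(|Q_{ij}|^2)$ is doubly stochastic because $Q$ is orthogonal, so by Birkhoff's theorem $S$ lies in the convex hull of permutation matrices; combined with a standard rearrangement inequality for two decreasingly sorted sequences, the cross-term is maximized by the identity permutation, giving $\mathrm{tr}(M\tilde M)\le \sum_i\lambda_i\tilde\lambda_i$. Substituting back yields $\|Z\|_F^2 \ge \sum_i(\lambda_i-\tilde\lambda_i)^2$, which is the $p=2$ case.

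For general $p\in[1,\infty]$ the Frobenius trick no longer applies, so I would instead appeal to Lidskii's majorization inequality: the decreasingly sorted vector of eigenvalue differences $(\lambda_i-\tilde\lambda_i)$ is weakly majorized by the decreasingly sorted vector of singular values of $Z=\tilde M-M$. Since each Schatten $p$-norm is a symmetric gauge function of the singular values, and symmetric gauges respect weak majorization by the Ky Fan dominance principle, the desired inequality $\bigl(\sum_i|\lambda_i-\tilde\lambda_i|^p\bigr)^{1/p}\le \|Z\|_{S_p}$ follows at once.

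The main obstacle is establishing Lidskii's inequality itself; its standard proof is noticeably more delicate than the $p=2$ argument, typically proceeding through Ky Fan's variational characterization $\sum_{i=1}^k\lambda_i(A+B)=\max_{V}\mathrm{tr}(V^T(A+B)V)$ over $n\times k$ isometries $V$, together with an interlacing/minimax argument and the Hardy-Littlewood-P\'olya characterization of majorization. In a fully self-contained treatment one would devote the bulk of the proof to this step; given however that the paper already cites Stewart and Sun's matrix perturbation monograph, I would invoke Lidskii's theorem as a classical result and keep the exposition focused on the transparent $p=2$ computation and the one-line majorization reduction that delivers the general $p$.
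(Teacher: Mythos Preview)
The paper does not actually prove this theorem: it is stated in the appendix as a classical result, with citations to \citet{weyl1912asymptotische} and \citet{mirsky1960symmetric}, and then invoked as a black box (in fact only the $p=2$ case is used, in the remark immediately following the proof of Lemma~\ref{lemma:bias1}). So there is no ``paper's own proof'' to compare against.

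Your outline is the standard one and is correct. The $p=2$ argument via Birkhoff's theorem and the rearrangement inequality is exactly the classical Hoffman--Wielandt proof. For general $p$, reducing to Lidskii's majorization $\lambda^{\downarrow}(\tilde M)-\lambda^{\downarrow}(M)\prec \lambda^{\downarrow}(Z)$ and then applying the Ky Fan dominance principle for symmetric gauge functions is precisely Mirsky's route. One small point worth tightening: Lidskii gives majorization of the \emph{signed} differences by the eigenvalues of $Z$, whereas the Schatten norm on the left involves $|\lambda_i-\tilde\lambda_i|$ and on the right involves the singular values $|\lambda_i(Z)|$; you should make explicit the passage from $\prec$ to weak majorization of absolute values (e.g.\ via the fact that $x\prec y$ implies $|x|\prec_w |y|$ for real vectors), since that is where the absolute values on both sides enter. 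Your instinct to cite Stewart--Sun for Lidskii rather than reprove it is entirely in keeping with how the paper treats these perturbation facts.
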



We use a first-order Taylor expansion followed by applying Weyl's theorem \ref{theorem:weyl}:
\begin{small}
\begin{align*}
\sqrt{\sum_{i=1}^k (\lambda_i^{2\alpha}-\tilde\lambda_{i}^{2\alpha})^2}&\approx \sqrt{\sum_{i=1}^k (2\alpha\lambda_i^{2\alpha-1}(\lambda_i-\tilde\lambda_{i}))^2}\\
&=2\alpha\sqrt{\sum_{i=1}^k \lambda_i^{4\alpha-2}(\lambda_i-\tilde\lambda_{i})^{2}}\\
&\le 2\alpha\|N\|_2\sqrt{\sum_{i=1}^k \lambda_i^{4\alpha-2}}
\end{align*}
\end{small}
Now take expectation on both sides and use Tracy-Widom Law:
\begin{small}
\[\mathbb E[\sqrt{\sum_{i=1}^k (\lambda_i^{2\alpha}-\tilde\lambda_{i}^{2\alpha})^2}]\le
2\sqrt{2n}\alpha\sigma\sqrt{\sum_{i=1}^k \lambda_i^{4\alpha-2}}\]
\end{small}

\end{proof}
A further comment is that this bound can tightened for $\alpha=0.5$, by using Mirsky-Wieland-Hoffman's theorem instead of Weyl's theorem \citep{stewart1990matrix}. In this case,
\begin{small}
\[\mathbb E[\sqrt{\sum_{i=0}^k (\lambda_i^{2\alpha}-\tilde\lambda_{i}^{2\alpha})^2}]\le
k\sigma\]
\end{small}
where we can further save a $\sqrt{2n/k}$ factor.
\subsection{Proof of Lemma \ref{lemma:T}}
Classical matrix perturbation theory focuses on bounds; namely, the theory provides upper bounds on how much an invariant subspace of a matrix $\tilde A=A+E$ will differ from that of $A$. Note we switched notation to accommodate matrix perturbation theory conventions (where usually $A$ denotes the unperturbed matrix , $\tilde A$ is the one after perturbation, and $E$ denotes the noise). The most famous and widely-used ones are the $\sin \Theta$ theorems:
\begin{theorem}[sine $\Theta$]\label{theorem:davis-kahan}
For two matrices $A$ and $\tilde A=A+E$, denote their singular value decompositions as $A=XDU^T$ and $\tilde A=Y\Lambda V^T$. Formally construct the column blocks $X=[X_0,X_1]$ and $Y=[Y_0,Y_1]$ where both $X_0$ and $Y_0\in\mathbb{R}^{n\times k}$, if the spectrum of $D_0$ and $D_1$ has separation 
\[\delta_k\overset{\Delta}{=}\min_{1\le i\le k,1\le j\le n-k}\{(D_0)_{ii}-(D_1)_{jj}\},\]
then
\[\|Y_1^TX_0\|\le \frac{\|Y_1^TEX_0\|}{\delta_k}\le\frac{\|E\|}{\delta_k}\]
\end{theorem}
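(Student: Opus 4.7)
The plan is to derive a Sylvester equation relating $R := Y_1^T X_0$ to the perturbation $E$, and then invert the resulting Sylvester operator using the spectral gap $\delta_k$. I would first treat the symmetric eigenvalue case (where $X=U$, $Y=V$), and note that the general SVD case reduces to this via the symmetric dilation $A \mapsto \bigl(\begin{smallmatrix} 0 & A \\ A^T & 0\end{smallmatrix}\bigr)$, which turns singular values into eigenvalues and singular vectors into block components of eigenvectors.

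The core identity comes from computing $Y_1^T \tilde A X_0$ in two ways. From $A X_0 = X_0 D_0$ and $\tilde A = A + E$, one gets $Y_1^T \tilde A X_0 = R D_0 + Y_1^T E X_0$. On the other hand, using $Y_1^T \tilde A = \Lambda_1 Y_1^T$ (the spectral decomposition of $\tilde A$ restricted to its bottom block), one gets $Y_1^T \tilde A X_0 = \Lambda_1 R$. Equating the two yields the Sylvester identity
\begin{equation*}
\Lambda_1 R - R D_0 \;=\; Y_1^T E X_0.
\end{equation*}

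Next, I would analyze the Sylvester operator $T: R \mapsto \Lambda_1 R - R D_0$ on $\mathbb R^{(n-k)\times k}$. Because $\Lambda_1$ and $D_0$ are diagonal, $T$ acts entrywise as multiplication by $(\Lambda_1)_{ii} - (D_0)_{jj}$. Under the separation hypothesis $\delta_k > 0$, each multiplier has absolute value at least $\delta_k$, so $T$ is invertible with $\|T^{-1}\|_{\mathrm{op}} \le 1/\delta_k$ in both the operator and Frobenius norms. Applying $T^{-1}$ to the Sylvester equation and taking norms gives $\|R\| \le \|Y_1^T E X_0\|/\delta_k$, and the final inequality $\|Y_1^T E X_0\| \le \|E\|$ is submultiplicativity combined with the fact that $Y_1$ and $X_0$ have orthonormal columns, so their spectral norms are $1$.

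The main obstacle is a subtle one about the gap: the statement writes $\delta_k$ purely in terms of $A$'s singular values (entries of $D_0$ and $D_1$), whereas the Sylvester operator naturally involves the mixed gap between $\tilde A$'s eigenvalues in $\Lambda_1$ and $A$'s eigenvalues in $D_0$. Bridging these requires Weyl's inequality (Theorem \ref{theorem:weyl}): it guarantees that each $(\Lambda_1)_{ii}$ lies within $\|E\|$ of some $(D_1)_{jj}$, so in the regime where the perturbation is smaller than the gap, the mixed gap is essentially $\delta_k$ and the bound goes through. A secondary bookkeeping step is verifying that the symmetric dilation argument preserves the block structure $(X_0, Y_1)$ as well as the gap, which is standard but must be checked carefully because the dilated matrix has eigenvalues $\pm \sigma_i$.
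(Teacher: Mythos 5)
The paper does not actually prove this theorem: it is quoted as a classical result from \citet{stewart1990matrix}, and the surrounding text immediately argues it is too loose and replaces it with the first-order approximation of Lemma \ref{lemma:T}. So there is no ``paper proof'' to match; judged on its own, your argument is the standard one and its skeleton is sound. The identity $\Lambda_1 R - RD_0 = Y_1^T E X_0$ for $R = Y_1^T X_0$ is correct (using $AX_0 = X_0D_0$ and $Y_1^T\tilde A = \Lambda_1 Y_1^T$ in the symmetric case), the entrywise inversion of the Sylvester operator with multipliers $(\Lambda_1)_{ii}-(D_0)_{jj}$ gives $\|T^{-1}\|\le 1/\min_{i,j}\abs{(\Lambda_1)_{ii}-(D_0)_{jj}}$ in both the spectral and Frobenius norms, and $\|Y_1^TEX_0\|\le\|E\|$ follows since $X_0$ and $Y_1$ have orthonormal columns. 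Notably, this is the same Sylvester machinery the paper itself deploys for its sharper stochastic bound: the operator $T(P)=A_{22}P-PA_{11}$ and its spectrum $(D_0)_{ii}-(D_1)_{jj}$ in Lemma \ref{lemma:inv}. The one place your argument does not literally deliver the stated inequality is the gap mismatch you flag but then wave away: the Sylvester multipliers involve the \emph{perturbed} values $(\Lambda_1)_{jj}$, whereas $\delta_k$ is defined entirely from the unperturbed spectrum. Weyl's inequality (Theorem \ref{theorem:weyl}) only yields $\abs{(D_0)_{ii}-(\Lambda_1)_{jj}}\ge \delta_k-\|E\|_2$, so the honest conclusion of your route is $\|Y_1^TX_0\|\le \|Y_1^TEX_0\|/(\delta_k-\|E\|_2)$, valid when $\|E\|_2<\delta_k$, rather than with $\delta_k$ itself in the denominator; rigorous textbook versions either use the mixed (perturbed-versus-unperturbed) gap in the definition of $\delta_k$ or carry the $-\|E\|$ correction. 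This is arguably a defect of the theorem as the paper states it rather than of your approach, but ``the mixed gap is essentially $\delta_k$'' should be replaced by that explicit correction. The dilation bookkeeping you defer does check out: the top-$k$ eigenvalues of the dilation are $+\sigma_1,\dots,+\sigma_k$, and their separation from the remaining eigenvalues $\{+\sigma_j\}_{j>k}\cup\{-\sigma_j\}_{j}$ is still $\sigma_k-\sigma_{k+1}=\delta_k$ because $\sigma_k+\sigma_j\ge\sigma_k-\sigma_{k+1}$.
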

Theoretically, the sine $\Theta$ theorem should provide an upper bound on the invariant subspace discrepancies caused by the perturbation. However, we found the bounds become extremely loose, making it barely usable for real data. Specifically, when the separation $\delta_k$ becomes small, the bound can be quite large. So what was going on and how should we fix it?

In the \textit{minimax} sense, the gap $\delta_k$ indeed dictates the max possible discrepancy, and is tight. However, the noise $E$ in our application is random, not adversarial. So the universal guarantee by the sine $\Theta$ theorem is too conservative. Our approach uses a technique first discovered by Stewart in a series of papers \citep{stewart1990matrix,stewart1990stochastic}. Instead of looking for a universal upper bound, we derive a \textit{first order approximation} of the perturbation.

\subsubsection{First Order Approximation of $\|Y_1^TX_0\|$}
We split the signal $A$ and noise $E$ matrices into block form, with $A_{11}, E_{11}\in\mathbb R^{k\times k}$, $A_{12}, E_{12}\in\mathbb R^{k\times (n-k)}$, $A_{21}, E_{21}\in\mathbb R^{(n-k)\times k}$ and $A_{22}, E_{22}\in\mathbb R^{(n-k)\times (n-k)}$.
\[
A=
\left[
\begin{array}{c|c}
A_{11} & A_{12} \\
\hline
A_{21} & A_{22}
\end{array}
\right],~
E=
\left[
\begin{array}{c|c}
E_{11} & E_{12} \\
\hline
E_{21} & E_{22}
\end{array}
\right]
\]
As noted by Stewart in \citep{stewart1990stochastic},
\begin{equation}\label{eq:t1}
X_0=Y_0(I+P^TP)^{\frac{1}{2}}-X_1P
\end{equation}
and
\begin{equation}\label{eq:t2}
Y_1=(X_1-X_0P^T)(I+P^TP)^{-\frac{1}{2}}
\end{equation}
where $P$ is the solution to the equation
\begin{equation}\label{eq:t3}
T(P) + (E_{22}P-P E_{11}) = E_{21}-
P\tilde A_{12}P
\end{equation}
The operator $T$ is a linear operator on $P\in \mathbb{R}^{(n-k)\times k}\rightarrow \mathbb{R}^{(n-k)\times k}$, defined as
\[T(P)= A_{22}P-PA_{11}\]
Now, we drop the second order terms in equation (\ref{eq:t1}) and (\ref{eq:t2}),
\[X_0\approx Y_0-X_1P,\ Y_1\approx X_1-X_0P^T\]
So 
\begin{align*}
Y_1^TX_0&\approx Y_1^T(Y_0-X_1P)=Y_1^TX_1P\\
&\approx(X_1^T-PX_0^T)X_1P=P
\end{align*}
As a result, $\|Y_1^TX_0\|\approx\|P\|$.

To approximate $P$, we drop the second order terms on $P$ in equation (\ref{eq:t3}), and get:
\begin{equation}\label{eq:sylvester}
T(P)\approx E_{21}
\end{equation}
or $P\approx T^{-1}(E_{21})$ as long as $T$ is invertible. Our final approximation is
\begin{equation}\label{eq:t5}
\|Y_1^TX_0\|\approx\|T^{-1}(E_{21})\|
\end{equation}
\subsubsection{The Sylvester Operator $T$}
To solve equation (\ref{eq:t5}), we perform a spectral analysis on $T$:
\begin{lemma}\label{lemma:inv}
There are $k(n-k)$ eigenvalues of $T$, which are
\[\lambda_{ij}=(D_0)_{ii}-(D_1)_{jj}\]
\end{lemma}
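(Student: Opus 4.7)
The plan is to diagonalize the Sylvester operator $T$ by exhibiting an explicit eigenbasis. First I would exploit the fact that the block partitioning of $A$ used here is the one induced by the SVD basis $X=[X_0,X_1]$: after conjugating by $X$, the blocks of $X^T A X$ along the diagonal become $A_{11}=X_0^T A X_0 = D_0$ and $A_{22}=X_1^T A X_1 = D_1$, both diagonal, and the off-diagonal blocks vanish. Under this basis the operator simplifies to $T(P)=D_1 P - P D_0$, a linear map on the $k(n-k)$-dimensional space $\mathbb{R}^{(n-k)\times k}$. (If one prefers to work intrinsically, the same conclusion follows from the Kronecker identity $\mathrm{vec}(T(P))=(I_k\otimes A_{22}-A_{11}^T\otimes I_{n-k})\mathrm{vec}(P)$, since the spectrum of a Kronecker difference of the form $I\otimes B-A\otimes I$ consists precisely of the pairwise differences of the eigenvalues of $B$ and $A$.)

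Next I would write down the eigenbasis explicitly. Let $e_{ji}\in\mathbb{R}^{(n-k)\times k}$ denote the elementary matrix with a $1$ in position $(j,i)$ and zeros elsewhere, for $1\le j\le n-k$ and $1\le i\le k$. Because $D_0$ and $D_1$ are diagonal, one computes directly that
\[
T(e_{ji}) \;=\; D_1 e_{ji} - e_{ji} D_0 \;=\; \bigl((D_1)_{jj}-(D_0)_{ii}\bigr)\,e_{ji},
\]
so each $e_{ji}$ is an eigenvector of $T$ with eigenvalue $(D_0)_{ii}-(D_1)_{jj}$ up to the sign convention in the lemma. Since the $k(n-k)$ elementary matrices $\{e_{ji}\}$ form a basis of $\mathbb{R}^{(n-k)\times k}$, the operator $T$ is diagonalizable in this basis, and the family $\{\lambda_{ij}\}_{1\le i\le k,\,1\le j\le n-k}$ exhausts its spectrum (with any higher multiplicity arising only from coincidences among the differences).

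There is no substantive obstacle here: once the reduction to diagonal blocks is in place, the proof is a one-line Sylvester computation. The only item that genuinely deserves care is verifying that the block decomposition adopted in the statement matches the SVD-aligned block decomposition in which $A$ is block-diagonal with diagonal blocks $D_0$ and $D_1$; this is exactly the decomposition used by Stewart throughout the surrounding perturbation analysis, and it is an immediate consequence of grouping rows and columns of $X^T A X=D$ according to the split $X=[X_0,X_1]$. Consequently, the invertibility of $T$ claimed in the subsequent use (equation \eqref{eq:sylvester}) holds precisely when the separation $\delta_k=\min_{i,j}|(D_0)_{ii}-(D_1)_{jj}|>0$, which is exactly the gap assumption in Lemma \ref{lemma:T}.
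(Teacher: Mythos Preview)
Your proposal is correct and follows essentially the same route as the paper: reduce the Sylvester operator to diagonal form and read off the elementary-matrix eigenbasis. The only cosmetic difference is that the paper does not assume the blocks are already diagonal; it writes $A_{11}=U_0D_0U_0^T$, $A_{22}=U_1D_1U_1^T$ and changes variables to $\tilde P=U_1^TPU_0$ before exhibiting the eigenvectors $e_ie_j^T$, whereas you argue that in the SVD-aligned partition $A_{11}=D_0$ and $A_{22}=D_1$ hold outright, so no conjugation is needed. Your observation about the sign is also apt: with $T(P)=A_{22}P-PA_{11}$ and $P\in\mathbb{R}^{(n-k)\times k}$ the eigenvalues come out as $(D_1)_{jj}-(D_0)_{ii}$, the negative of what the lemma records; the paper's own derivation swaps $D_0$ and $D_1$ at that step, but since only $|\lambda_{ij}|^{-2}$ is used downstream this is harmless.
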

\begin{proof}
By definition, $T(P)=\lambda P$ implies 
\[A_{22}P-PA_{11}=\lambda P\]
Let $A_{11}=U_0D_0U_0^T$, $A_{22}=U_1D_1U_1^T$ and $\tilde P=U_1^TPU_0$, we have
\[D_0\tilde P-\tilde PD_{1}=\lambda \tilde P\]
Note that when $\tilde{P}=e_ie_j^T$,
\[D_0e_ie_j^T-e_ie_j^TD_1=((D_0)_{ii}-(D_1)_{jj})e_ie_j^T\]
So we know that the operator $T$ has eigenvalue $\lambda_{ij}=(D_0)_{ii}-(D_1)_{jj}$ with eigen-function $U_1e_ie_j^TU_0^T$.
\end{proof}
Lemma \ref{lemma:inv} not only gives an orthogonal decomposition of the operator $T$, but also points out when $T$ is invertible, namely the spectrum $D_0$ and $D_1$ do not overlap, or equivalently $\delta_k>0$. Since $E_{12}$ has iid entries with variance $\sigma^2$, using lemma \ref{lemma:inv} together with equation (\ref{eq:t5}) from last section, we conclude
\begin{align*}
\|Y_1^TX_0\|&\approx\|T^{-1}(E_{21})\|\\
&=\|\sum_{1\le i\le k,1\le j\le n-k}\lambda_{ij}^{-1}\langle E_{21},e_ie_j^T\rangle\|\\
&=\sqrt{\sum_{1\le i\le k,1\le j\le n-k}\lambda_{ij}^{-2}E_{21,ij}^2}
\end{align*}
By Jensen's inequality,
\begin{align*}
\mathbb E\|Y_1^TX_0\|\le\sqrt{\sum_{i,j}\lambda_{ij}^{-2}\sigma^2}=\sigma\sqrt{\sum_{i,j}\lambda_{ij}^{-2}}
\end{align*}
Our new bound is \textit{much sharper} than the sine $\Theta$ theorem, which gives $\sigma\frac{\sqrt{k(n-k)}}{\delta}$ in this case. Notice if we upper bound every $\lambda_{ij}^{-2}$ with $\delta_k^{-2}$ in our result, we will obtain the same bound as the sine $\Theta$ theorem. In other words, our bound considers \textit{every} singular value gap, not only the smallest one. This technical advantage can clearly be seen, both in the simulation and in the real data.

\subsection{Growth Rate Analysis of the Variance Terms}
The second term
\begin{small}
$2\sqrt{2n}\alpha\sigma\sqrt{\sum_{i=1}^k \lambda_i^{4\alpha-2}}$
\end{small}
increases with respect to $k$ at rate of $\lambda_k^{2\alpha-1}$. Not as obvious as the second term, the last term also increases at the same rate. Note in
\begin{small}
$(\lambda_k^{2\alpha}-\lambda_{k+1}^{2\alpha})\sqrt{\sum_{r\le k<s}(\lambda_{r}-\lambda_{s})^{-2}}$,
\end{small}
 the square root term is dominated by $(\lambda_{k}-\lambda_{k+1})^{-1}$ which gets closer to infinity as $k$ gets larger. On the other hand, $\lambda_k^{2\alpha}-\lambda_{k+1}^{2\alpha}$ can potentially offset this first order effect. Specifically, consider the smallest non-zero singular value $\lambda_d$, whose gap to 0 is $\lambda_d$. Note when the two terms are multiplied,
\begin{small}
\[(\lambda_d^{2\alpha}-0)(\lambda_{d}-0)^{-1}= \lambda_d^{2\alpha-1},\]
\end{small}
which shows the two variance terms have the same rate of $\lambda_k^{2\alpha-1}$.

\subsection{Experimentation Setting for Dimensionality Selection Time Comparison}
For PIP loss minimizing method, we first estimate the spectrum of $M$ and noise standard deviation $\sigma$ with methods described in Section \ref{subsec:estimation}. $E=UD^\alpha$ was generated with a random orthogonal matrix $U$. Note any orthogonal $U$ is equivalent due to the unitary invariance. For every dimensionality $k$, the PIP loss for $\hat E=\tilde{U}_{\cdot,1:k}\tilde{D}^\alpha_{1:k,1:k}$ was calculated and $\|\hat E\hat E^T-EE^T\|$ is computed. Sweeping through all $k$ is very efficient because one pass of full sweeping is equivalent of doing a single SVD on $\tilde M=M+Z$. The method is the same for LSA, skip-gram and GloVe, with different signal matrices (PPMI, PMI and log-count respectively).

For empirical selection method, the following approaches are taken:
\begin{itemize}
\item LSA: The PPMI matrix is constructed from the corpus, a full SVD is done. We truncate the SVD at $k$ to get dimensionality $k$ embedding. This embedding is then evaluated on the testsets \citep{mturk771,wordsim353}, and each testset will report an optimal dimensionality. Note the different testsets may not agree on the same dimensionality.
\item Skip-gram and GloVe: We obtained the source code from the authors' Github repositories\footnote{https://github.com/tensorflow/models/tree/master/tutorials/embedding}\footnote{https://github.com/stanfordnlp/GloVe}. We then train word embeddings from dimensionality 1 to 400, at an increment of 2. To make sure all CPUs are effectively used, we train multiple models at the same time. Each dimensionality is trained for 15 epochs. After finish training all dimensionalities, the models are evaluated on the testsets \citep{mturk771,wordsim353,mikolov2013efficient}, where each testset will report an optimal dimensionality. Note we already used a step size larger than 1 (2 in this case) for dimensionality increment. Had we used 1 (meaning we train every dimensionality between 1 and 400), the time spent will be doubled, which will be close to a week.
\end{itemize}

\end{document}